\documentclass{article}

\usepackage[accepted]{icml2021}

\bibliographystyle{apalike}

\usepackage[utf8]{inputenc} %
\usepackage[T1]{fontenc}    %
\usepackage{hyperref}       %
\usepackage{url}            %
\usepackage{booktabs}       %
\usepackage{amsfonts}       %
\usepackage{nicefrac}       %
\usepackage{microtype}      %

\usepackage{amsmath, amssymb}
\usepackage{amsthm}
\usepackage{bigstrut}
\usepackage{float}
\usepackage{enumitem}
\usepackage{graphicx}

\usepackage[font=footnotesize]{caption}
\usepackage{mathtools}
\usepackage{subfig}
\usepackage{wrapfig}

\usepackage{dsfont}
\usepackage{mathtools}
\usepackage{multirow}
\usepackage{xfrac}

\usepackage{amsmath,amsfonts,bm}
\usepackage{bbm}
\usepackage[mathscr]{eucal}

\def\eqref#1{equation~\ref{#1}}

\def\ceil#1{\lceil #1 \rceil}

\def\1{\bm{1}}

\def\eps{{\epsilon}}

\def\vOne{{\bm{1}}}

\def\vu{{\bm{u}}}
\def\vv{{\bm{v}}}
\def\vw{{\bm{w}}}
\def\vx{{\bm{x}}}

\def\vz{{\bm{z}}}

\def\mA{{\bm{A}}}
\def\mB{{\bm{B}}}

\def\mM{{\bm{M}}}

\def\mW{{\bm{W}}}

\DeclareMathAlphabet{\mathsfit}{\encodingdefault}{\sfdefault}{m}{sl}
\SetMathAlphabet{\mathsfit}{bold}{\encodingdefault}{\sfdefault}{bx}{n}

\def\cO{{\mathcal{O}}}

\DeclareMathOperator{\sign}{sign}

\usepackage{soul}
\usepackage{colortbl}
\definecolor{mydarkblue}{rgb}{0,0.08,0.45}
\definecolor{mydarkgreen}{rgb}{0,0.45,0.08}
  \hypersetup{ %
    colorlinks=true,
    linkcolor=mydarkblue,
    citecolor=mydarkblue,
    filecolor=mydarkblue,
    urlcolor=mydarkblue}

\usepackage{cuted}
\usepackage{flushend}
\usepackage{balance}

\newcommand{\sm}{supplementary materials}
\newcommand\op[1]{\operatorname{#1}}

\newcommand{\eg}{{\it e.g.}, }
\newcommand{\ie}{{\it i.e.}, }

\makeatletter
\def\@fnsymbol#1{\ensuremath{\ifcase#1\or w \or k \or \dagger\or \mathsection\or
   \ddager\or \mathparagraph\or \|\or **\or \dagger\dagger
   \or \ddagger\ddagger \else\@ctrerr\fi}}
\makeatother

\renewcommand{\epsilon}{\varepsilon}

\usepackage{bm}
\usepackage{bbm}

\newtheorem{theorem}{Theorem}
\newtheorem{proposition}{Proposition}
\newtheorem{inf_theorem}{Theorem}
\newtheorem{inf_proposition}{Proposition}

\newtheorem{lemma}{Lemma}
\newtheorem{fact}{Fact}
\newtheorem{remark}{Remark}
\newtheorem{corollary}{Corollary}

\date{}
\begin{document}

\twocolumn[
\icmltitle{Finding Everything within Random Binary Networks}

\begin{icmlauthorlist}
\icmlauthor{Kartik Sreenivasan}{uw}
\icmlauthor{Shashank Rajput}{uw}
\icmlauthor{Jy-yong Sohn}{uw}
\icmlauthor{Dimitris Papailiopoulos}{uw}
\end{icmlauthorlist}

\icmlaffiliation{uw}{University of Wisconsin-Madison}

\icmlcorrespondingauthor{Kartik Sreenivasan}{ksreenivasa2@wisc.edu}

\icmlkeywords{Machine Learning, ICML}

\vskip 0.3in
]

\printAffiliationsAndNotice{}

\begin{abstract}
A recent work by \citet{RamanujanEtAl20} provides significant empirical evidence that sufficiently overparameterized, random neural networks contain untrained subnetworks that achieve state-of-the-art accuracy on several predictive tasks. A follow-up line of theoretical work provides justification of these findings by proving that slightly overparameterized neural networks, with commonly used continuous-valued random initializations can indeed be pruned to approximate any target network. In this work, we show that the amplitude of those random weights does not even matter. We prove that any target network can be approximated up to arbitrary accuracy by simply pruning a random network of binary $\{\pm1\}$ weights that is only a polylogarithmic factor wider and deeper than the target network.
\end{abstract}

\section{Introduction}
As the number of parameters of state-of-the-art networks continues to increase, pruning has become a prime choice for sparsifying and compressing a model.
A rich and long body of research, dating back to the 80s, shows that one can prune most networks to a tiny fraction of their size, while maintaining high accuracy 
\citep{mozer1989skeletonization,Stork93,LevinEtAl94,LeCunEtAl90,han2015learning, han2015deep,li2016pruning, wen2016learning,hubara2016binarized,hubara2017quantized,he2017channel,wu2016quantized,zhu2016trained, he2018amc,zhu2017prune,ChengEtAl19,mlsys2020_73,deng2020model}.

A downside of most of the classic pruning approaches is that they sparsify a model once it is trained to full accuracy, followed by significant fine-tuning, resulting in  a computationally burdensome procedure. 
\citet{frankle2018lottery}
conjectured the existence of {\it lottery tickets}, \ie sparse subnetworks at (or near) initialization, that can be trained---just once---to reach the accuracy of state-of-the-art dense models. This may help alleviate the computational burden of prior approaches, as training is predominantly carried on a much sparser model. The conjectured existence of these lucky tickets is referred to as the Lottery Ticket Hypothesis (LTH).
\citet{frankle2018lottery} and \cite{frankle2020linear} show that not only do {\it lottery tickets exist}, but also the cost of ``{\it winning the lottery}'' is not very high. 

Along the LTH literature, a curious phenomenon was observed; even at initialization and in the {\it complete absence} of training, one can find sub-networks of the random initial model that have prediction accuracy far beyond random guessing~\citep{zhou2019deconstructing, RamanujanEtAl20, WangEtAl19}. \citet{RamanujanEtAl20} reported this  in its most striking form:
state-of-the-art accuracy models for CIFAR10 and ImageNet, {\it simply reside} within slightly larger, yet completely random networks, and appropriate pruning---and mere pruning---can reveal them!
This ``{\it pruning is all you need}'' phenomenon  is sometimes referred to as the Strong Lottery Ticket Hypothesis.

A recent line of work attempts to establish the theoretical validity of the Strong LTH by studying the following non-algorithmic question:
\begin{quotation}
\begin{center}
    \noindent \it Can a random network be pruned to approximate a  target function $f(x)$?
\end{center}
\end{quotation}
Here, $f$ represents a bounded range labeling function that acts on inputs $x\in\mathcal{X}$, and is itself a neural network of finite width and depth. This assumption is not limiting, as neural networks are universal approximators \citep{stinchombe1989universal, barron1993universal, scarselli1998universal, klusowski2018approximation, perekrestenko2018universal, hanin2019universal, kidger2020universal}.
Note that the answer to the above question is trivial if one does not constraint the size of the random initial network, for all interesting cases of ``\emph{random}''. Indeed, if we start with an exponentially wider random neural network compared to the one representing $f$, by sheer luck, one can always find weights, for each layer, near-identical to those of any target neural network that is $f$. 
Achieving this result with a constrained overparameterization, \ie the degree by which the random network to be pruned is wider/deeper than $f$, is precisely why this question is challenging.

\citet{malach2020proving} were the first to prove that the Strong LTH is true, assuming polynomial-sized overparameterization. Specifically, under some mild assumptions, they showed that to approximate a target network of width $d$ and depth $l$ to within error $\epsilon$, it suffices to prune a random network of width $\widetilde{\cO}(d^2 l^2/\epsilon^2)$ and depth $2l$.
\citet{pensia2020optimal} offered an exponentially tighter bound using a connection to the SubsetSum problem. They showed that to approximate a target network within error $\epsilon$, it is sufficient to prune a randomly initialized network of width $\cO(d\log(dl/\epsilon))$ and depth $2l$. A corresponding lower bound for constant depth networks was also established.
\citet{orseau2020logarithmic} were also able to reduce the dependence on $\epsilon$ to logarithmic. They show that in order to approximate a target network within error $\epsilon$, it suffices to prune a random network of width $\cO(d^2\log(dl/\epsilon))$ if the weights are initialized with the hyperbolic distribution. However, this bound on overparamaterization is still polynomial in the width $d$.

The above theoretical studies have focused exclusively on continuous distribution for initialization. However, in the experimental work by \cite{RamanujanEtAl20}, the authors manage to obtain the best performance by pruning networks of scaled, {\it binary weights}.
Training binary networks has been studied extensively in the past~\citep{courbariaux2015binaryconnect, simons2019review} as they are compute, memory and hardware efficient, though in many cases they suffer from significant loss of accuracy. The findings of \cite{RamanujanEtAl20} suggest that the accuracy loss may not be fundamental to  networks of binary weights, when such networks are learned by pruning.
Arguably, since ``\emph{carving out}'' sub-networks of random models is expressive enough to approximate a target function, \eg according to \cite{pensia2020optimal,malach2020proving}, one is posed to wonder about the importance of weights altogether.
So perhaps, {\it binary weights is all you need}.

\citet{diffenderfer2021multi} showed that indeed scaled binary networks can be pruned to approximate any target function. The required overparameterization is similar to that of \citet{malach2020proving}, \ie polynomial in the width, depth and error of the approximation. Hence in a similar vein to the improvement that \citet{pensia2020optimal} offered over the bounds of \citet{malach2020proving}, we explore whether such an improvement is possible on the results of \citet{diffenderfer2021multi}.

\paragraph{Our Contributions:} In this work, we offer an exponential improvement to the theoretical bounds by \cite{diffenderfer2021multi}, establishing the following.
\begin{inf_theorem} (informal)\label{thm:infThm1}
Consider a randomly initialized, FC, binary $\{\pm1\}$ network of ReLU activations, with depth $\Theta\left(l\log{(\frac{d l}{\epsilon})}\right)$ and width $\Theta\left(d \log^2{(\frac{d l}{\epsilon\delta})}\right)$, with the last layer consisting of scaled binary weights $\{\pm C\}$.
Then, there is always a constant $C$ such that this network can be pruned to approximate \textbf{any} FC ReLU network, up to error $\epsilon>0$ with depth $l$ and width $d$, with probability at least $1-\delta$.
\end{inf_theorem}

Therefore, we show that in order to approximate any target network, it suffices to just prune a logarithmically overparameterized binary network (Figure~\ref{fig:concept}). In contrast to  \citet{diffenderfer2021multi}, our construction only requires that the last layer be scaled while the rest of the network is purely binary $\{\pm1\}$.  We show a comparison of the known Strong LTH results in Table~\ref{tab:comparison}.
 
\begin{table*}[t]
  \centering
  \resizebox{2\columnwidth}{!}{%
  \begin{tabular}{|l|l|l|l|l|}
    \hline
    \textbf{Reference} & \textbf{Width} & \textbf{Depth} & \textbf{Total Params} &  \textbf{Weights} \\ \hline
		 \citet{malach2020proving}		&		$\widetilde{\cO}(d^2 l^2/\epsilon^2)$		&		$2l$       &     $\widetilde{\cO}(d^2 l^3/\epsilon^2)$         &    Real\\ \hline
		 
		 \citet{orseau2020logarithmic}		&		$\cO(d^2\log(dl/\epsilon)$		&		$2l$		&      $\cO(d^2l\log(dl/\epsilon)$       &				Real (Hyperbolic)\\
		 \hline
		 
		 \citet{pensia2020optimal} &   $\cO(d\log(dl/\min\{\epsilon, \delta\})$      &   $2l$       &        $\cO(dl\log(dl/\min\{\epsilon, \delta\})$      &      Real\\
		 \hline
		 
		 \citet{diffenderfer2021multi}		&		$\cO((ld^{3/2}/\epsilon) + ld\log(ld/\delta))$		&		$2l$		&      $\cO((l^2d^{3/2}/\epsilon) + l^2d\log(ld/\delta))$       &				$\{\pm \epsilon\}$\\
		 \hline
		 
         \textbf{Ours}, Theorem~\ref{thm:thm1}		&		$\cO(d\log^2{(dl/\epsilon\delta)})$      &		$\cO(l\log{(dl/\epsilon)})$		&      $\cO(dl\log^3{(dl/\epsilon\delta)})$       &		Binary-$\{\pm1\}$\footnotemark \\ \hline
    \end{tabular}
    }
\caption{Comparing the upper bounds for the overparameterization needed to approximate a target network (of width $d$ and depth $l$) within error $\epsilon > 0$ with probability at least $1-\delta$ by pruning a randomly initialized network.}
\label{tab:comparison}
\end{table*}

\begin{figure}[t]
\vspace{-3mm}
  \centering
  \includegraphics[width=0.5\textwidth]{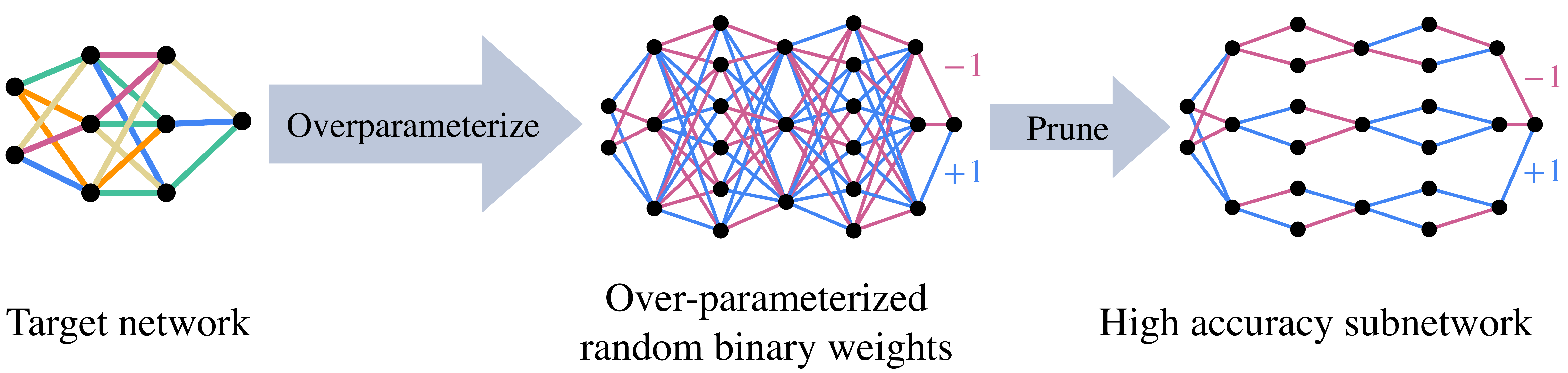}
\textbf{}  \caption{Approximating a target network with high accuracy by pruning overparameterized random binary network. In this paper, we show that logarithmic overparameterization in both width and depth is sufficient.}
  \label{fig:concept}
  \vspace{-3mm}
\end{figure}

In light of our theoretical results, one may wonder why in the literature of training, \ie assigning a sign pattern to fixed architecture, binary networks, a loss of accuracy is observed, \eg \cite{rastegari2016xnor}. Is this an algorithmic artifact, or does pruning random signs offer higher expressivity than assigning the signs?
We show that  there exist target functions that can be well approximated by pruning binary networks, yet none of all possible, binary, fully connected networks can approximate it.

\begin{inf_proposition} (informal)\label{prop:infProp2}
There exist a function $f$ that can be represented by pruning a random 2-layer binary network of width $d$, but not by any  2-layer fully-connected binary network of width $d$.
\end{inf_proposition}

Note that although finding a subnetwork of a random binary network results in a ``ternary'' architecture (e.g., 0 becomes a possible weight), the total number of possible choices of subnetworks is $2^N$, if $N$ is the total number of weights. This is equal to the total number of sign assignments of the same FC network. Yet, as shown in the proposition above, pruning a random FC network is provably more expressive than finding a sign assignment for the same architecture.

\section{Preliminaries and Problem Setup}\label{sec:Prelim}

Let $f(\vx): \mathbb{R}^{d_0} \rightarrow \mathbb{R}$ be the target FC network with $l$ layers and ReLU activations, represented as
\begin{equation*}
    f(\vx) = \sigma(\mW_l\sigma(\mW_{l-1} \dots \sigma(\mW_1\vx))),
\end{equation*}
where $\vx \in \mathbb{R}^{d_0}$ is the input, $\sigma(\vz) = \max\{\vz, 0\}$ is the ReLU activation and $\mW_i \in \mathbb{R}^{d_{i} \times d_{i-1}}$ is the weight matrix of layer $i \in [l]$. With slight abuse of terminology, we will refer to $f$ as a network, as opposed to a labeling function.
We then consider a binary\footnotemark[1] network of depth $l'$
\begin{equation*}
    g(\vx) = \sigma((\epsilon'\mB_{l'})\sigma(\mB_{l'-1} \dots \sigma(\mB_1\vx))),
\end{equation*}
where $\mB_i \in \{-1, +1\}^{d_{i}' \times d_{i-1}'}$ is a binary weight matrix, with all weights drawn uniformly at random from $\{\pm 1\}$, for all layers $i \in [l']$ and the last layer is multiplied by a factor of $\epsilon'>0$. The scaling factor is calculated precisely in Section~\ref{subsec:putting_things_together}, where we show that it is unavoidable for function approximation (\ie regression), rather than classification.

Our goal is to find the smallest network $g$ so that it contains a subnetwork $\tilde{g}$ which approximates $f$ well. More precisely, we will bound the overparameterization of the binary network, under which one can find supermask matrices $\mM_i \in \{0,1\}^{d_{i}' \times d_{i-1}'}$, for each layer $i \in [l']$, such that the pruned network
\begin{align*}
    \tilde{g}({\vx}) =
    &\sigma(\varepsilon'({\mM}_{l'}\odot{\mB}_{l'})\sigma(({\mM}_{l'-1}\odot{\mB}_{l'-1})\ldots\\
    &\ldots\sigma(({\mM}_{1}\odot{\mB}_{1}){\vx})))
\end{align*}
is $\epsilon$-close to $f$ in the sense of uniform approximation over the unit-ball, \ie 
\begin{equation*}
\label{eq:epsApprox}
    \max_{\vx \in \mathbb{R}^{d_0}: ||\vx|| \leq 1} ||f(\vx) - \tilde g(\vx)|| \leq \epsilon
\end{equation*}
for some desired $\epsilon > 0$.
In this paper, we show $g$ only needs to be polylogarithmically larger than the target network $f$ to have this property.
We formalize this and provide a proof in the following sections.

Henceforth, we denote $[k] = \{1, 2, \cdots, k \}$ for some positive integer $k$. Unless otherwise specified, $||\cdot||$ refers to the $\ell_2$ norm. We also use the max norm of a matrix, defined as $||\mA||_{\text{max}} := \max_{ij} |A_{ij}|$. The element-wise product between two matrices $\mA$ and $\mB$ is denoted by $\mA \odot \mB$. We assume without loss of generality that the weights are specified in the base-10 system. However, since we don't specify the base of the logarithm explicitly in our computations, we use the $\Theta(\cdot)$ notation to hide constant factors that may arise from choosing different bases.

\footnotetext[1]{The weights of all the layers are purely binary $\{\pm1\}$ except for the last layer which is scaled so that it is $\{\pm\epsilon'\}$ where $\epsilon' = (\eps/d^2l)^l$.}

\section{Strong Lottery Tickets by Binary Expansion}
\label{sec:intUpperBound}
In this section, we formally present our approximation results. We show that in order to approximate any target network $f(\vx)$ within arbitrary approximation error $\eps$, it suffices to prune a random binary\footnotemark[1] network $g(\vx)$ that is just polylogarithmically deeper and wider than the target network.

\subsection{Main Result}

First, we point out that the scaling factor $\eps'$ in the final layer of $g(\vx)$ is necessary for achieving arbitrary small approximation error for any target network $f(\vx)$. In other words, it is impossible to approximate any arbitrary target network with a purely binary $\{\pm1\}$ network regardless of the overparameterization. To see this, note that for the simple target function $f(x) = \epsilon x,\; x \in [0, 1]$ and $\epsilon \in [0.5, 1)$, the best approximation possible by a binary network is $g(x) = x$ and therefore $\max_{x \in \mathbb{R}: |x| \leq 1} |f(x) - g(x)| \geq (1-\epsilon)$ for any binary network $g$. We will show that just by allowing the weights of the final layer to be scaled, we can provide a uniform approximation guarantee while the rest of the network remains binary $\{\pm1\}$. Formally, we have the following theorem:

\begin{theorem}
\label{thm:thm1}
Consider the set of FC ReLU networks $\mathcal{F}$ defined as
\begin{align*}
    \mathcal{F} = \{f: f(\vx) = \sigma(\mW_l\sigma(\mW_{l-1} \dots \sigma(\mW_1\vx))), \\\forall i\; \mW_i \in \mathbb{R}^{d_i\times d_{i-1}}\; ||\mW_i|| \leq 1\},
\end{align*}
and let $d=\max_{i} d_i$. For arbitrary target approximation error $\eps$, let $g(\vx) = \sigma(\epsilon'\mB_{l'}\sigma(\mB_{l'-1} \dots \sigma(\mB_1\vx)))$ (here $\epsilon'=(\eps/d^2 l)^l$) be a randomly initialized network with depth $l' = \Theta(l\log({d^2l/\epsilon}))$ such that every weight is drawn uniformly from $\{-1, +1\}$ and the layer widths are $\Theta\left(\log{d^2l/\epsilon}\cdot\log\left(\frac{dl\log^2{(d^2l/\epsilon)}}{\delta}\right)\right)$ times wider than $f(\vx)$.

Then, with probability at least $1-\delta$, for every $f \in \mathcal{F}$, there exist pruning matrices $\mM_i$ such that
\begin{equation*}
    \max_{\vx \in \mathbb{R}^{d_0}: ||\vx|| \leq 1} |f(\vx) - \tilde g(\vx)| \leq \epsilon
\end{equation*}
holds 
where 
\begin{align*}
    \tilde{g}(\vx) := &\sigma(\eps' ({\mM}_{l'}\odot {\mB}_{l'}) \sigma(({\mM}_{l'-1}\odot{\mB}_{l'-1})\ldots\\
    &\ldots \sigma(({\mM}_{1}\odot{\mB}_{1}){\vx}))).
\end{align*}
\end{theorem}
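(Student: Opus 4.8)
The overall strategy I would pursue is a reduction to the continuous-weight case of \citet{pensia2020optimal} via a \emph{binary-expansion gadget}: show that any real weight $w\in[-1,1]$ can be approximated to precision $2^{-k}$ (equivalently, $\Theta(\log(1/\text{prec}))$ bits) by a short chain of neurons whose incoming weights are all $\pm1$, up to an overall scaling that can be deferred to the very end. Concretely, the plan has four stages.

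\textbf{Stage 1: From real network to quantized network.} First I would argue that it suffices to approximate a \emph{quantized} target $\hat f$ in which every entry of every $\mW_i$ is rounded to $q := \Theta(\log(d^2l/\epsilon))$ bits. A standard Lipschitz/perturbation argument (propagating an entrywise error of $2^{-q}$ through $l$ layers, each of operator norm $\le 1$, with $d$ inputs) shows the resulting output error is at most $\epsilon/2$; the factor $(\epsilon/d^2l)^l$ is exactly what one gets by tracking the product of per-layer blow-ups, which is why $\epsilon'$ takes that form and why the scaling is pushed into the last layer. This reduces the problem to exactly representing (or $\epsilon/2$-approximating) a network whose weights are dyadic rationals with $q$ bits.

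\textbf{Stage 2: The binary gadget.} Next I would build, for a single quantized weight $w = \sum_{j=1}^{q} b_j 2^{-j}$ with $b_j\in\{0,1\}$ (and a sign), a small ReLU subnetwork all of whose \emph{realized} weights are in $\{\pm1\}$ that computes $w\cdot x$ up to an overall global constant. The key observation is that multiplication by $2^{-1}$ can be synthesized: since a real scalar cannot be produced by $\pm1$ weights directly, one instead carries the ``missing'' powers of two as a uniform scaling that is the same for every neuron in a layer, so it factors out and lands in $\epsilon'$ at the output. Within a layer, selecting bit $b_j$ is done by pruning (supermask $\mM$ zeroes out the unwanted $\pm1$), and getting the correct sign is done by pruning among the $\pm1$ candidates present in a sufficiently wide random layer — this is where randomness enters: each needed $\pm1$ value appears among $\Theta(\log(\text{params}/\delta))$ i.i.d.\ draws with probability $1-\delta/(\text{params})$, and a union bound over all $\Theta(dl\log^2(\cdot))$ edges gives the overall $1-\delta$.

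\textbf{Stage 3: Assembly, depth/width bookkeeping.} I would then stack these gadgets: each original layer of $f$ expands into $\Theta(\log(d^2l/\epsilon))$ binary layers (giving the claimed depth $l' = \Theta(l\log(d^2l/\epsilon))$), and each original neuron expands by a $\Theta(\log(d^2l/\epsilon)\cdot\log(dl\log^2(\cdot)/\delta))$ width factor (the first log for the bit-chain, the second for the randomness margin). Care is needed because ReLU is not linear: to pass a signed intermediate value through the binary layers one must use the standard positive/negative-part trick ($x = \sigma(x) - \sigma(-x)$), keeping two rails, which only costs constant factors. The scaling constant $C$ (equivalently $\epsilon'$) is then set exactly to the accumulated product of the per-layer $2^{-\cdot}$ factors, and one checks it equals $(\epsilon/d^2l)^l$ up to the $\Theta(\cdot)$ hidden constants.

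\textbf{Main obstacle.} The hardest part is making the ``multiplication by a power of two becomes a global scaling'' idea rigorous \emph{in the presence of ReLUs and of the additive combinations across a layer}: a layer computes $\sigma(\sum_j \pm x_j)$, and I need the per-layer scaling to be truly uniform so it commutes past all the ReLUs and all the sums, yet simultaneously I need enough bit-resolution at each stage that the quantization error does not blow up super-linearly in depth. Managing this — essentially showing the SubsetSum-style construction of \citet{pensia2020optimal} can be run with $\pm1$-only weights once a single global scalar is allowed, and that the error analysis closes with the stated $\epsilon'$ — is the crux; the random-sign union bound and the width/depth arithmetic are then routine.
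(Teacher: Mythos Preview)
Your plan is essentially the paper's approach: quantize to $\Theta(\log(d^2l/\epsilon))$ digits, convert to integers via a global scale (which becomes $\epsilon'$), represent each integer weight by its binary expansion using $\pm1$ gadgets selected by pruning, carry signed intermediates on the two rails $\sigma(x),\sigma(-x)$, and pay a $\log(1/\delta)$ width factor so that random signs contain the needed pattern. Two places where your description drifts from what actually makes the argument close are worth flagging.

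First, your account of $\epsilon'$ is off. It is not what ``tracking per-layer blow-ups'' in the error analysis produces; it is exactly $c^{-l}$ where $c=10^{p}=d^2l/\epsilon$ is the factor that, via ReLU positive homogeneity, turns the $p$-digit truncated network into an \emph{integer} network. The Lipschitz propagation only shows the truncated network is $\epsilon$-close to $f$; the rescaling to integers is a separate, exact step, and $\epsilon'$ is its residue.

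Second, and more substantively, your closing phrase ``the SubsetSum-style construction of \citet{pensia2020optimal} can be run with $\pm1$-only weights'' is not just loose language---taken literally it fails. A pruned sum of $k$ i.i.d.\ $\pm1$'s can only hit integers in $[-k,k]$, so matching an integer of size $W=d^2l/\epsilon$ that way costs width $W$, which is polynomial, not logarithmic. The paper's mechanism is instead a \emph{depth-based} diamond gadget: a chain of $n$ binary layers, each with two parallel $+1$ edges into a ReLU, computes $x\mapsto 2^{n}\sigma(x)$; pruning rungs yields any $2^{k}$ with $k\le n$, and $\lfloor\log|W|\rfloor$ such chains in parallel realize $\sum_k z_k 2^{k}$. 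This is precisely why depth grows by $\Theta(\log|W|)$ per original layer and why width grows only by $\Theta(\log|W|)$ rather than $W$. Your Stage~3 arithmetic matches this, but your Stage~2 never says how the powers of two are produced, and pointing at SubsetSum would not get you there.
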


\begin{remark}
The dimensions of the weight matrices of $g(\vx)$ in Theorem~\ref{thm:thm1} are specified more precisely below.
Let $p=(d^2l/\epsilon)$.
Since $l' = l \log (p)$, we have $\lfloor\log(p)\rfloor$ layers in $g(\vx)$ that approximates each layer in $f(\vx)$. For each $i \in [l]$, the dimension of $\mB_{(i-1)\lfloor\log(p)\rfloor+1}$ is
\begin{equation*}
    \Theta\left(d_{i-1} \log(p)\log\left(\frac{dl\log^2{(p)}}{\delta}\right)\right) \times d_{i-1},
\end{equation*}
the dimension of $\mB_{i\lfloor\log(p)\rfloor}$ is
\begin{equation*}
    d_{i} \times \Theta\left(d_{i-1} \log{(p)}\log\left(\frac{dl\log^2{(p)}}{\delta}\right)\right)
\end{equation*}
and the remaining $\mB_{(i-1)\lfloor\log(p)\rfloor+k}$ where $1 < k < \lfloor \log(p) \rfloor$ have the dimension
\begin{align*}
    \Theta\left(d_{i-1} \log{(p)}\log\left(\frac{dl\log^2{(p)}}{\delta}\right)\right)\\
    \;\times \Theta\left(d_{i-1} \log{(p)}\log\left(\frac{dl\log^2{p}}{\delta}\right)\right).
\end{align*}

\end{remark}

\subsection{Proof of Theorem~\ref{thm:thm1}}

First, we show in Section~\ref{sec:log_precision_enough} that any target network in $f(\vx) \in \mathcal{F}$ can be approximated within $\epsilon > 0$, by another network $\hat{g}_{p}(\vx)$ having weights of finite-precision at most $p$ digits where $p$ is logarithmic in $d, l,$ and $\epsilon$.

Then, in Section~\ref{sec:bin_weights_enough}, we show that any finite precision network can be represented exactly using a binary network where all the weights are binary ($\pm1$) except the last layer, and the last layer weights are scaled-binary ($\pm \epsilon'$).
The proof sketch is as follows. First, through a simple scaling argument we show that any finite-precision network is equivalent to a network with integer weights in every layer except the last. We then present Theorem~\ref{thm:thm2} which shows the deterministic construction of a binary network using diamond-shaped gadgets that can be pruned to approximate any integer network.
Lemma~\ref{lem:networkIntRandApprox} extends the result to the case when the network is initialized with random binary weights.

Putting these together completes the proof of Theorem~\ref{thm:thm1} as shown in Section~\ref{subsec:putting_things_together}.

\subsubsection{Logarithmic precision is sufficient}
\label{sec:log_precision_enough}

First, we consider the simplest setting wherein the target network contains a single weight \ie $h(x) = \sigma(wx)$, where $x, w$ are scalars, the absolute values of which are bounded by $1$. This assumption can be relaxed to any finite norm bound.
We begin with noting a fact that $\log(1/\eps)$ digits of precision are sufficient to approximate a real number within error $\eps$, as formalized below  %
\begin{fact}\label{lem:precision}
Let $w\in \mathbb{R},\; |w|\leq 1$ and $\hat{w}$ be a finite-precision truncation of $w$ with $\ceil{\Theta(\log(1/\eps))}$ digits. Then $|w-\hat{w}| \leq \eps$ holds.
\end{fact}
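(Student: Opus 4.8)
The plan is to observe that Fact~\ref{lem:precision} is just the elementary statement that base-$b$ truncation of a number in $[-1,1]$ to $k$ digits incurs error at most $b^{-k}$, and then to solve $b^{-k}\le\eps$ for $k$. First I would write, for $|w|<1$, the (not necessarily terminating) base-$10$ expansion $w = s\sum_{j\ge1} d_j 10^{-j}$ with $s=\sign(w)\in\{-1,+1\}$ and $d_j\in\{0,1,\dots,9\}$, and define the $k$-digit truncation $\hat w := s\sum_{j=1}^{k} d_j 10^{-j}$. The boundary cases $w=0$ and $|w|=1$ are handled trivially: the truncation is exact (in the latter case $w=\pm1$ needs a single digit to the left of the decimal point and incurs zero error).

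Next I would bound the discarded tail by a geometric series, $|w-\hat w| = \sum_{j>k} d_j 10^{-j} \le 9\sum_{j>k} 10^{-j} = 9\cdot\frac{10^{-(k+1)}}{1-10^{-1}} = 10^{-k}$, being careful to track the constant so that it collapses exactly to $10^{-k}$. Then choosing $k=\ceil{\log_{10}(1/\eps)}$ gives $10^{-k}\le 10^{-\log_{10}(1/\eps)}=\eps$, which is the claimed inequality.

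Finally I would note base-independence: replacing the digit base $10$ by any integer $b\ge2$ only turns $10^{-k}$ into $b^{-k}$ and $\log_{10}$ into $\log_b$, and $\log_b(1/\eps)$ differs from $\log_{10}(1/\eps)$ by a constant factor, so $k=\ceil{\Theta(\log(1/\eps))}$ digits suffice regardless of base—consistent with the paper's convention of hiding the logarithm base inside $\Theta(\cdot)$. There is no genuine obstacle here; the only points requiring a little care are the edge cases (sign, $|w|=1$, terminating expansions) and stating the geometric-series bound with the right constant so the tail estimate is exactly $10^{-k}$ rather than a looser multiple of it.
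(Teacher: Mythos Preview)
Your argument is correct and is the standard elementary verification via the geometric tail of a base-$b$ expansion. The paper itself does not prove this statement at all---it is labeled a \emph{Fact} and simply asserted as the observation that $\log(1/\eps)$ digits of precision suffice to approximate a bounded real within $\eps$---so there is nothing to compare against; your write-up is exactly the kind of one-paragraph justification one would supply if asked to make the fact explicit.
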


Now we state the result for the case when the target network contains a single weight $w$.
\begin{lemma}\label{lem:scalarRealApprox}
Consider a network $h(x) = \sigma(wx)$ where $w \in \mathbb{R}, |w| \leq 1$. For a given $\eps>0$, let $\hat{w}$ be a finite-precision truncation of $w$ up to $\log(1/\epsilon)$ digits and let $ \hat{g}_{\log(1/\epsilon)}(x) = \sigma(\hat{w} x)$. Then we have
\begin{equation*}
    \max_{x \in \mathbb{R}: |x| \leq 1} |h(x) - \hat{g}_{\log(1/\epsilon)}(x)| \leq \epsilon.
\end{equation*}

\end{lemma}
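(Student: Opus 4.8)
\textbf{Proof plan for Lemma~\ref{lem:scalarRealApprox}.}
The plan is to reduce the claim to a one-line application of Fact~\ref{lem:precision} together with the fact that ReLU is $1$-Lipschitz. First I would fix an arbitrary $x\in\mathbb{R}$ with $|x|\le 1$ and write
\begin{equation*}
    |h(x) - \hat g_{\log(1/\epsilon)}(x)| = |\sigma(wx) - \sigma(\hat w x)|.
\end{equation*}
Since $\sigma(z)=\max\{z,0\}$ satisfies $|\sigma(a)-\sigma(b)|\le |a-b|$ for all $a,b\in\mathbb{R}$, this is at most $|wx - \hat w x| = |x|\,|w-\hat w| \le |w-\hat w|$, using $|x|\le 1$.

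Next I would invoke Fact~\ref{lem:precision}: because $\hat w$ is the finite-precision truncation of $w$ to $\lceil\Theta(\log(1/\epsilon))\rceil$ digits and $|w|\le 1$, we have $|w-\hat w|\le\epsilon$. Chaining the two bounds gives $|h(x)-\hat g_{\log(1/\epsilon)}(x)|\le\epsilon$ for every such $x$, and since the bound is uniform in $x$ one may take the maximum over $\{x:|x|\le 1\}$ to conclude
\begin{equation*}
    \max_{x\in\mathbb{R}:\,|x|\le 1} |h(x) - \hat g_{\log(1/\epsilon)}(x)| \le \epsilon.
\end{equation*}

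There is essentially no obstacle here; the only minor point to be careful about is the logarithm base, which the paper has already declared is absorbed into the $\Theta(\cdot)$ notation, so writing ``$\log(1/\epsilon)$ digits'' is consistent with Fact~\ref{lem:precision}'s $\lceil\Theta(\log(1/\epsilon))\rceil$. I would also note explicitly that the $1$-Lipschitz property of $\sigma$ is what lets the scalar weight case generalize later — indeed the same two-step argument (Lipschitz contraction of $\sigma$, then per-weight truncation error) is the template that will be used when the target network has many weights and many layers, with the truncation error compounding across layers in a controlled way. For this lemma, though, the single-weight, single-layer case is immediate from the two ingredients above.
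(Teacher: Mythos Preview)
Your proposal is correct and follows essentially the same approach as the paper: invoke Fact~\ref{lem:precision} to get $|w-\hat w|\le\epsilon$, use $|x|\le 1$ to bound $|wx-\hat w x|\le\epsilon$ (the paper phrases this as ``Cauchy--Schwarz'', you just multiply), and then apply the $1$-Lipschitz property of $\sigma$. The only cosmetic difference is the order in which you apply the Lipschitz step and the truncation bound.
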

\begin{proof}
By Fact~\ref{lem:precision}, we know that $|w - \hat{w}| \leq \epsilon$. Applying Cauchy-Schwarz with $|x| \leq 1$ gives us $|\hat{w}x - wx| \leq \epsilon$. Since this holds for any $x$ and ReLU is 1-Lipschitz, the result follows.
\end{proof}

Lemma~\ref{lem:scalarRealApprox} can be extended to show that it suffices to consider finite-precision truncation up to $\log(d^2l/\epsilon)$ digits to approximate a network for width $d$ and depth $l$. This is stated more formally 
below.

\begin{lemma}\label{lem:networkRealApprox}
Consider a network $h(\vx) = \sigma(\mW_l\sigma(\mW_{l-1} \dots \sigma(\mW_1\vx)))$ where $\mW_i \in \mathbb{R}^{d_i\times d_{i-1}}$, $||\mW_i|| \leq 1$. For a given $\epsilon > 0$, define $\hat{g}_{\log(d^2l/\epsilon)}(\vx) = \sigma(\widehat{\mW}_l\sigma(\widehat{\mW}_{l-1} \dots \sigma(\widehat{\mW}_1\vx)))$ where $\widehat{\mW_i}$ is a finite precision truncation of $\mW_i$ up to $\log(d^2l/\epsilon)$ digits, where $d = \max_i d_i$. Then we have
\begin{equation*}
    \max_{\vx \in \mathbb{R}^{d_0}: ||\vx|| \leq 1} |h(\vx) - \hat{g}_{\log(d^2l/\epsilon)}(\vx)| \leq \epsilon.
\end{equation*}
\end{lemma}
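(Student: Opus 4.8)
The plan is to propagate the per-weight truncation error of Lemma~\ref{lem:scalarRealApprox} through the $l$ layers of the network, tracking how the error amplifies as it passes through each subsequent matrix multiplication and ReLU. Write $h_i(\vx)$ for the output of the first $i$ layers of $h$ and $\hat h_i(\vx)$ for the output of the first $i$ layers of $\hat g_{\log(d^2l/\epsilon)}$, so $h_0 = \hat h_0 = \vx$ and $h_i = \sigma(\mW_i h_{i-1})$, $\hat h_i = \sigma(\widehat{\mW}_i \hat h_{i-1})$. I would first record two elementary facts: (i) since $\|\mW_i\| \le 1$ and ReLU is $1$-Lipschitz with $\sigma(\vZero)=\vZero$, we have $\|h_i(\vx)\| \le \|\vx\| \le 1$ for all $i$ and all $\|\vx\|\le 1$ — so all intermediate activations live in the unit ball; and (ii) for each $i$, $\|\mW_i - \widehat{\mW}_i\|$ is small. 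For (ii), Fact~\ref{lem:precision} gives $|W_i^{(jk)} - \widehat W_i^{(jk)}| \le \epsilon'$ entrywise where $\epsilon'$ corresponds to $\log(d^2l/\epsilon)$ digits of precision; bounding the spectral norm of a $d_i\times d_{i-1}$ matrix by (say) its Frobenius norm gives $\|\mW_i - \widehat{\mW}_i\| \le d\,\epsilon'$, which is the reason the precision is chosen as $\log(d^2 l/\epsilon)$ rather than $\log(1/\epsilon)$ — the extra $d^2 l$ factor absorbs exactly this dimensional blowup plus the depth.

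The core is a one-step recursion. Using the triangle inequality and $1$-Lipschitzness of ReLU,
\begin{align*}
\|h_i - \hat h_i\| &= \|\sigma(\mW_i h_{i-1}) - \sigma(\widehat{\mW}_i \hat h_{i-1})\| \le \|\mW_i h_{i-1} - \widehat{\mW}_i \hat h_{i-1}\| \\
&\le \|\mW_i(h_{i-1} - \hat h_{i-1})\| + \|(\mW_i - \widehat{\mW}_i)\hat h_{i-1}\| \le \|h_{i-1} - \hat h_{i-1}\| + d\,\epsilon',
\end{align*}
where the last step uses $\|\mW_i\|\le 1$, $\|\hat h_{i-1}\|\le 1$, and the Frobenius bound. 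Unrolling from $i=0$ with $\|h_0 - \hat h_0\| = 0$ gives $\|h_l - \hat h_l\| \le l\,d\,\epsilon'$. Now I would choose the precision so that $\epsilon' \le \epsilon/(d^2 l)$, i.e. $\Theta(\log(d^2 l/\epsilon))$ digits by Fact~\ref{lem:precision}; then $l\,d\,\epsilon' \le \epsilon/d \le \epsilon$, and since this holds uniformly over $\|\vx\|\le 1$ we get $\max_{\|\vx\|\le 1}|h(\vx) - \hat g_{\log(d^2l/\epsilon)}(\vx)| \le \epsilon$, as claimed.

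I do not expect a serious obstacle here; this is a standard Lipschitz-propagation argument. The only point needing care is the constant/dimension bookkeeping: one must be slightly generous in bounding $\|\mW_i - \widehat{\mW}_i\|$ (the Frobenius bound costs a factor $\sqrt{d_i d_{i-1}} \le d$, and if one instead bounds the spectral norm by $d\cdot\|\cdot\|_{\max}$ the same $d$ appears), and then verify that the stated precision $\log(d^2l/\epsilon)$ is enough to kill both this factor $d$ and the depth factor $l$ with room to spare. A secondary subtlety is that Lemma~\ref{lem:scalarRealApprox} and Fact~\ref{lem:precision} are stated for scalars with $|w|\le 1$; applying them entrywise is immediate since each $|W_i^{(jk)}| \le \|\mW_i\| \le 1$. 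Everything else is just unrolling the recursion.
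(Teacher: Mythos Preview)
Your proposal is correct and follows essentially the same Lipschitz-propagation argument as the paper's proof: bound $\|\mW_i-\widehat{\mW}_i\|$ by the Frobenius norm of the entrywise truncation error, then unroll a one-step triangle-inequality recursion through the $l$ layers to accumulate at most $\epsilon/l$ per layer. The only minor slip is that you invoke $\|\hat h_{i-1}\|\le 1$ having only argued it for $h_{i-1}$; the paper splits the other way (pairing $\widehat{\mW}_i$ with the difference term and using $\|\widehat{\mW}_i\|\le 1$, which it also does not justify), but either variant is easily patched since $\|\widehat{\mW}_i\|\le 1+d\,\epsilon'$ and the resulting $(1+d\epsilon')^l$ factor is absorbed into the $\Theta(\log(d^2l/\epsilon))$ precision.
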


We provide the proof of Lemma~\ref{lem:networkRealApprox} as well as approximation results for a single neuron and layer in \sm.

\subsubsection{Binary weights are sufficient}
\label{sec:bin_weights_enough}
We begin by showing that any finite-precision FC ReLU network can be represented perfectly as a FC ReLU network with integer weights in every layer except the last, using a simple scaling argument. Since ReLU networks are positive homogenous, we have that $\sigma(c\cdot z) = c\cdot\sigma(z)$ for $c>0$. Given a network $g_{p}$ where all the weights are of finite-precision at most $p$, we can apply this property layerwise with the scaling factor $c=10^p$ so that,
\begin{align}
    f(\vx) &= \sigma(\mW_l\sigma(\mW_{l-1} \dots \sigma(\mW_1\vx))) \nonumber\\
    &= \frac{1}{c^l}\sigma(c\mW_l\sigma(c\mW_{l-1} \dots \sigma(c\mW_1\vx))) \nonumber\\
    &= \sigma(c'\widehat{\mW}_l\sigma(\widehat{\mW}_{l-1} \dots \sigma(\widehat{\mW}_1\vx))) \label{eq:scalingTrick}
\end{align}
where $\widehat{\mW}_i = 10^p \mW_i$ is a matrix of integer weights and $c' = \frac{1}{c^l}$. Therefore, the rescaled network has integer weights in every layer except the last layer which has the weight matrix $c'\widehat{\mW}_l = (c^{-l})\mW_l$.

In the remaining part of this section, we show that any FC ReLU network with integer weights can be represented exactly by pruning a purely binary $(\pm1)$ FC ReLU network which is just polylogarithmic wider and deeper. More precisely, we prove the following result.
\begin{theorem}
\label{thm:thm2}
Consider the set of FC ReLU networks with integer weights $\mathcal{F}_W$ defined as
\begin{align*}
    \mathcal{F}_W = \{f: f(\vx) = \sigma(\mW_l\sigma(\mW_{l-1} \dots \sigma(\mW_1\vx))),\\
    \forall i\; \mW_i \in \mathbb{Z}^{d_i\times d_{i-1}}\; ||\mW_i||_{max} \leq W\}
\end{align*}
where $W > 0$. Define $d = \max_i d_i$ and let $g(\vx) = \sigma(\mB_{l'}\sigma(\mB_{l'-1} \dots \sigma(\mB_1\vx)))$ be a network with depth $l' = \Theta(l\log({|W|}))$ where every weight is uniform-randomly generated from $\{-1, +1\}$ and the layer widths are $\Theta\left(\log{|W|}\cdot\log\left(\frac{dl\log^2{|W|}}{\delta}\right)\right)$ times wider than $f(\vx)$.

Then, with probability at least $1-\delta$, for every $f \in \mathcal{F}$, there exist pruning matrices $\mM_i$ such that
\begin{equation*}
    f(\vx) = \tilde{g}(\vx)
\end{equation*}
holds for any $\vx \in \mathbb{R}^{d_0}$ where $\tilde{g}(\vx) := \sigma(({\mM}_{l'}\odot {\mB}_{l'}) \sigma(({\mM}_{l'-1}\odot{\mB}_{l'-1})\ldots\sigma(({\mM}_{1}\odot{\mB}_{1}){\vx})))$.
\end{theorem}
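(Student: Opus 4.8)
The plan is to reproduce the target integer network $f$ \emph{exactly}, one layer at a time: each layer $\vy\mapsto\sigma(\mW_i\vy)$ of $f$ will be simulated by a block of $\Theta(\log W)$ consecutive binary layers, and stacking the $l$ blocks yields depth $l'=\Theta(l\log W)$. The design rests on two devices. First, every intermediate value $v$ that may be negative is carried by a \emph{pair} of neurons holding $\sigma(v)$ and $\sigma(-v)$, so that $v=\sigma(v)-\sigma(-v)$; this is how signed quantities are pushed through a ReLU network whose only admissible weights are $+1$ and $-1$. The first binary layer turns the raw input $\vx$ into such a pair, and the only ReLUs in the whole construction that are \emph{not} transparent are the ones at the end of each block, which are arranged to coincide with the ReLUs of $f$ — this is exactly what makes the simulation exact rather than merely $\epsilon$-close. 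Second, integer weights are handled by \emph{binary expansion} together with ``diamond-shaped'' doubling gadgets.

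Concretely, for layer $i$ I would write each entry $w_{ab}$ of $\mW_i$ in binary, $w_{ab}=\sum_{j=0}^{\lfloor\log_2 W\rfloor}2^j\big(p^{(j)}_{ab}-n^{(j)}_{ab}\big)$ with $p^{(j)}_{ab},n^{(j)}_{ab}\in\{0,1\}$, let $P^{(j)},N^{(j)}$ be the corresponding $0/1$ matrices, and set $\vy^{\pm}:=\sigma(\pm\vy)$, so that
\[
\mW_i\vy=\sum_{j}2^j\Big[\big(P^{(j)}\vy^{+}+N^{(j)}\vy^{-}\big)-\big(N^{(j)}\vy^{+}+P^{(j)}\vy^{-}\big)\Big],
\]
where all four matrix--vector products are entrywise nonnegative. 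The block then (i) runs a doubling sub-block of $\Theta(\log W)$ layers that, from $\vy^{\pm}$, makes all scaled copies $2^j\vy^{\pm}$ available simultaneously — doubling a nonnegative coordinate $u$ is the ``diamond'' $\sigma(u+u)=2u$ fed by two $+1$ edges from two copies of $u$, and older copies are forwarded along; (ii) for each output coordinate $a$ forms $\mathrm{POS}_a$, a single neuron summing (via $+1$ edges) exactly the available values $2^j y^{+}_b$ with $p^{(j)}_{ab}=1$ and $2^j y^{-}_b$ with $n^{(j)}_{ab}=1$ — all summands are $\ge0$, so this ReLU is transparent — and symmetrically $\mathrm{NEG}_a$; and (iii) outputs $\sigma(\mathrm{POS}_a-\mathrm{NEG}_a)$ through one neuron with one $+1$ and one $-1$ incoming edge, which equals $\sigma\big((\mW_i\vy)_a\big)$. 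A block thus has depth $\Theta(\log W)$ and width $O(d\log W)$, giving $\Theta(l\log W)$ depth and $O(dl\log^2 W)$ neurons overall in this deterministic construction, with an $O(\log W)$ width blow-up relative to $f$.

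The remaining step (Lemma~\ref{lem:networkIntRandApprox}) is the derandomization, for which I would argue as follows. In the deterministic construction every kept edge needs a prescribed sign ($+1$ for the doublings and the $\mathrm{POS}/\mathrm{NEG}$ sums, $-1$ for the final subtraction), while every pruned edge is harmless whatever its random sign. So replace each logical neuron by a \emph{pool} of $m=\Theta\big(\log(dl\log^2 W/\delta)\big)$ physical neurons: for each prescribed edge it suffices that \emph{one} of the $m$ upstream copies carries the matching random sign, and the complementary event has probability $\le 2^{-m}$. The crucial point is that \emph{which} edges are prescribed depends on the target $f$, but the events ``this pool contains a usable copy'' do not; hence a union bound over the $\mathrm{poly}(dl\log W)$ pairs of pools — not over the $(2W+1)^{\Theta(d^2l)}$ members of $\mathcal F_W$ — makes the total failure probability at most $\delta$. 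This multiplies the width by $m$, yielding the stated blow-up $\Theta\big(\log W\cdot\log(dl\log^2 W/\delta)\big)$ with depth still $\Theta(l\log W)$, and on the surviving subnetwork $\tilde g\equiv f$.

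I expect the main obstacle to be exactly this deterministic/random interface: the gadgets must be built so their correctness hinges only on local, $f$-independent ``the pool has a usable sign'' events — otherwise one is forced to union-bound over the exponentially many integer networks in $\mathcal F_W$, which would inflate $m$ to $\mathrm{poly}(d)$ — while simultaneously keeping the positive/negative-part bookkeeping tight enough that every ReLU encountered is either transparent (applied to a provably nonnegative value) or an exact copy of a ReLU of $f$. The remaining pieces — the binary-expansion identity, the nonnegativity arguments, and the width/depth arithmetic — are routine once the gadget is fixed.
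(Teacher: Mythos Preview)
Your proposal is correct and follows essentially the same route as the paper: binary expansion of each integer weight, ``diamond'' doubling gadgets of depth $\Theta(\log W)$ to manufacture the powers $2^j$, the $v=\sigma(v)-\sigma(-v)$ pair trick to carry signed values through ReLUs, reuse of the same $2^j y_b^{\pm}$ neurons across all output coordinates of a layer (your $\mathrm{POS}_a/\mathrm{NEG}_a$ step is the paper's ``prune only the last layer'' reuse in Lemma~\ref{lem:layerIntApprox}), and derandomization by replication with a union bound over $\mathrm{poly}(dl\log W)$ local events that are $f$-independent. The only cosmetic differences are that the paper writes $w=\mathrm{sign}(w)\sum_k z_k 2^k$ and selects via $f_k^{+},f_k^{-}$ rather than your $P^{(j)},N^{(j)}$ split, and that its derandomization treats each four-edge diamond as a unit (success probability $2^{-4}$, then widen by $k$) rather than pooling neuron-by-neuron; both yield the same $\Theta(\log(dl\log^2 W/\delta))$ factor.
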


\begin{remark}
The dimensions of the weight matrices of $g(\vx)$ in Theorem~\ref{thm:thm2} are specified more precisely below.
Note that we have $\lfloor\log{|W|}\rfloor$ layers in $g(\vx)$ that exactly represents each layer in $f(\vx)$.
For each $i \in [l]$, the dimension of  $\mB_{(i-1)\lfloor\log{|W|}\rfloor+1}$ is
\begin{equation*}
    \Theta\left(d_{i-1} \log{|W|}\log\left(\frac{dl\log^2{|W|}}{\delta}\right)\right) \times d_{i-1},
\end{equation*}
the dimension of $\mB_{i\lfloor\log{|W|}\rfloor}$ is
\begin{equation*}
    d_{i} \times \Theta\left(d_{i-1} \log{|W|}\log\left(\frac{dl\log^2{|W|}}{\delta}\right)\right)
\end{equation*}
and the remaining $\mB_{(i-1)\lfloor \log |W| \rfloor + k}$ where $1 < k < \lfloor \log |W| \rfloor$ have the dimension
\begin{align*}
    \Theta\left(d_{i-1} \log{|W|}\log\left(\frac{dl\log^2{|W|}}{\delta}\right)\right)\\
    \times \Theta\left(d_{i-1} \log{|W|}\log\left(\frac{dl\log^2{|W|}}{\delta}\right)\right)
\end{align*}
\end{remark}

\begin{remark}
Note that $\tilde g(\vx)$ is \textbf{exactly} equal to $f(\vx)$. Furthermore, we provide a uniform guarantee for \emph{all} networks in $\mathcal{F}$ by pruning a single over-parameterized network, like~\citet{pensia2020optimal}.
\end{remark}

\begin{remark}
Theorem~\ref{thm:thm2} can be made into a deterministic construction for any fixed target network thereby avoiding the $\log(1/\delta)$ overparameterization. We extend to the random initialization setting by resampling the construction a sufficient number of times.
\end{remark}

\begin{remark}
To resolve issues of numerical overflow, we can insert scaling neurons after every layer.
\end{remark}

\begin{remark}
The integer assumption can easily be converted to a finite-precision assumption using a simple scaling argument. Since all networks in practice use finite-precision arithmetic, Theorem~\ref{thm:thm2} may be of independent interest to the reader. However, we emphasize here that there is no approximation error in this setting. Practitioners who are interested in small error($10^{-k}$) can just apply Theorem~\ref{thm:thm1} and incur an overparameterization factor of $\cO(k)$.
\end{remark}

The proof of Theorem~\ref{thm:thm2} will first involve a deterministic construction for a binary network that gives us the desired guarantee. We then extend to the random binary initialization. The construction is based on a diamond-shaped gadget that allows us to approximate a single integer weight by pruning a binary ReLU network with just logarithmic overparameterization.

First, consider a target network that contains just a single integer weight \ie $h(x) = \sigma(wx)$. We will show that there exists a binary FC ReLU network $g(x)$ which can be pruned to approximate $h(x)$.

\begin{lemma}\label{lem:scalarIntApprox}
Consider a network $h(x) = \sigma(wx)$ where $w \in \mathbb{Z}$. Then there exists a FC ReLU binary network $g(x)$ of width and depth $O(\log|w|)$ that can be pruned to $\tilde{g}(x)$ so that $\tilde{g}(x) = h(x)$ for all $x \in \mathbb{R}$.
\end{lemma}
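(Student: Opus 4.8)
The plan is to represent the integer weight $w$ through its binary (base-2) expansion and then realize each power-of-two scaling using a chain of ``doubling'' gadgets built from $\pm 1$ weights. Write $|w| = \sum_{j=0}^{k} b_j 2^j$ with $b_j \in \{0,1\}$ and $k = \lfloor \log_2 |w| \rfloor = O(\log |w|)$. The sign of $w$ can be absorbed at the very end (since $-\sigma(z)$ can be obtained by flipping the sign of the outgoing weight, or by carrying both $\sigma(z)$ and $\sigma(-z)$ rails). So it suffices to build, by pruning a binary network, a subnetwork computing $\sigma(|w| x)$ from the input $x$, up to the observation that a single ReLU $\sigma(wx)$ with $w<0$ is just the constant $0$ for $x\ge 0$ and linear for $x<0$; to handle all signs of $x$ uniformly I would propagate the pair $(\sigma(x), \sigma(-x))$ so that $x = \sigma(x)-\sigma(-x)$ is recoverable at every stage, which is the standard trick for simulating linear maps with ReLUs.

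The key building block is a ``diamond gadget'' that takes a value $v$ (carried as the pair $\sigma(v),\sigma(-v)$) and outputs $2v$ (again as a pair). Concretely, $2v = 2\sigma(v) - 2\sigma(-v) = \sigma(v)+\sigma(v) - \sigma(-v)-\sigma(-v)$, so with two parallel $+1$ edges from the $\sigma(v)$ node and two $+1$ edges into the next node (summing to the coefficient $2$), and symmetric $-1$ edges for the negative rail, one layer of a binary network with width $O(1)$ computes the doubling. Since all desired edge weights here are in $\{-2,-1,0,1,2\}$ and a sum of at most two $\pm 1$ terms can realize any such value after pruning, a constant-width binary layer suffices. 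Chaining $k$ such gadgets produces $2^j x$ at depth $O(j)$ for every $j \le k$ along the way; at the gadget producing $2^j x$ I additionally ``tap off'' a copy scaled by the bit $b_j$ (i.e. either keep it or prune it) and route it into an accumulator rail that sums $\sum_j b_j 2^j x = |w| x$. The accumulator likewise only ever needs to add a new term to a running partial sum, so each accumulation step is a single binary-weight edge; running the accumulator in parallel with the doubling chain keeps the total depth $O(\log|w|)$ and total width $O(\log|w|)$ (one ``slot'' of constant width per bit, or a single constant-width rail threaded through all $k$ layers). Finally apply one more ReLU with the correct output sign to get $\sigma(wx) = h(x)$ exactly.

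The steps in order: (1) reduce to $w \ge 0$ by sign bookkeeping on the output edge; (2) set up the two-rail encoding $x \mapsto (\sigma(x),\sigma(-x))$ using the first binary layer; (3) define the doubling gadget and verify that after pruning a constant-width binary layer it maps $(\sigma(v),\sigma(-v)) \mapsto (\sigma(2v),\sigma(-2v))$ exactly; (4) stack $k+1 = O(\log|w|)$ gadgets, tapping off bit-masked copies into an accumulator rail, and verify by induction on the layer index that after layer $j$ the accumulator holds $(\sum_{i\le j} b_i 2^i) x$ in two-rail form; (5) at the output, collapse the two rails and insert the final ReLU with sign $\operatorname{sign}(w)$, giving $\tilde g(x) = \sigma(wx) = h(x)$ for all $x \in \mathbb{R}$; (6) tally width and depth as $O(\log|w|)$.

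The main obstacle I anticipate is not the arithmetic but the bookkeeping of doing everything with weights drawn from $\{-1,+1\}$ only — in particular making sure that (a) every ``effective'' weight the construction wants (the $2$'s in the doubling gadget, the bit-mask $0/1$ taps, the running-sum additions) is genuinely realizable as a pruned sum of a bounded number of $\pm 1$ parallel edges, so that the width blow-up per gadget is truly $O(1)$ and not growing; and (b) the two-rail nonnegativity invariant is maintained through every layer, since ReLU is only linear on the nonnegative part, so I must always split a signed quantity into its positive and negative parts before the next layer rather than letting a ReLU clip it. Once the gadget's input/output invariant $(\sigma(v),\sigma(-v)) \mapsto (\sigma(2v),\sigma(-2v))$ is nailed down, the rest is a routine induction; the later Theorem~\ref{thm:thm2} then handles vector-valued layers and the extension to random (rather than hand-picked) binary initializations.
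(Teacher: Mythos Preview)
Your proposal is correct and shares the paper's three core ingredients: the base-$2$ expansion of $w$, a constant-width ReLU gadget that doubles its input, and the two-rail trick $x\mapsto(\sigma(x),\sigma(-x))$ to keep everything linear through the ReLUs. The layouts differ, though. The paper places $n+1$ copies of the full-depth diamond gadget \emph{in parallel}---the $k$-th copy is pruned so that it outputs $2^k x$---and then selects which bits to include by pruning only the \emph{final} summation layer (Figure~\ref{fig:binary_weight_approx}). You instead run a \emph{single} doubling chain and siphon bit-masked copies into a running accumulator rail. Both designs hit width and depth $O(\log|w|)$ for this lemma, but the paper's decision to confine all $w$-dependent pruning to the last layer is deliberate: when the construction is lifted to a full layer $\mW\in\mathbb{Z}^{d_1\times d_0}$ in Lemma~\ref{lem:layerIntApprox}, that gadget bank can be reused verbatim across all $d_1$ output neurons, which is precisely what avoids the $d_0 d_1$ width blow-up. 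Your accumulator hardwires one specific $w$ into the intermediate-layer masks, so when you scale up you will need a separate accumulator rail per output neuron while sharing the doubling chain; that still works, but the reuse argument is less immediate than with last-layer-only pruning.
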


\emph{Proof. }%
Note that since $w$ is an integer, it can be represented using its binary (base-$2$) expansion
\begin{equation}
\label{eq:bin_exp}
    w = \sign(w) \sum_{k=0}^{\lfloor \log_2 |w| \rfloor} z_k \cdot 2^k, \quad \; z_k \in \{0, 1\}.
\end{equation}

For ease of notation, we will use $\log(\cdot)$ to represent $\log_2(\cdot)$ going forward.
Denote $n = \lfloor \log_2 |w| \rfloor$.
The construction of $g_n(x)$ in Figure~\ref{fig:2a} shows that $2^n$ can be represented by a binary network with ReLU activations for any $n$. We will refer to this network as the diamond-shaped ``gadget''.

Note that the expansion in Equation~(\ref{eq:bin_exp}) requires $2^k$ for all $0 \leq k \leq n =  \lfloor\log|w|\rfloor$. Luckily, any of these can be represented by just pruning $g_n(x)$ as shown in Figure~\ref{fig:2b}.

\begin{figure}[t]
    \vspace{-2mm}
	\centering
	\subfloat[][\centering{$g_n(x) = 2^n \max\{0, x\}$}]{\includegraphics[height=18mm ]{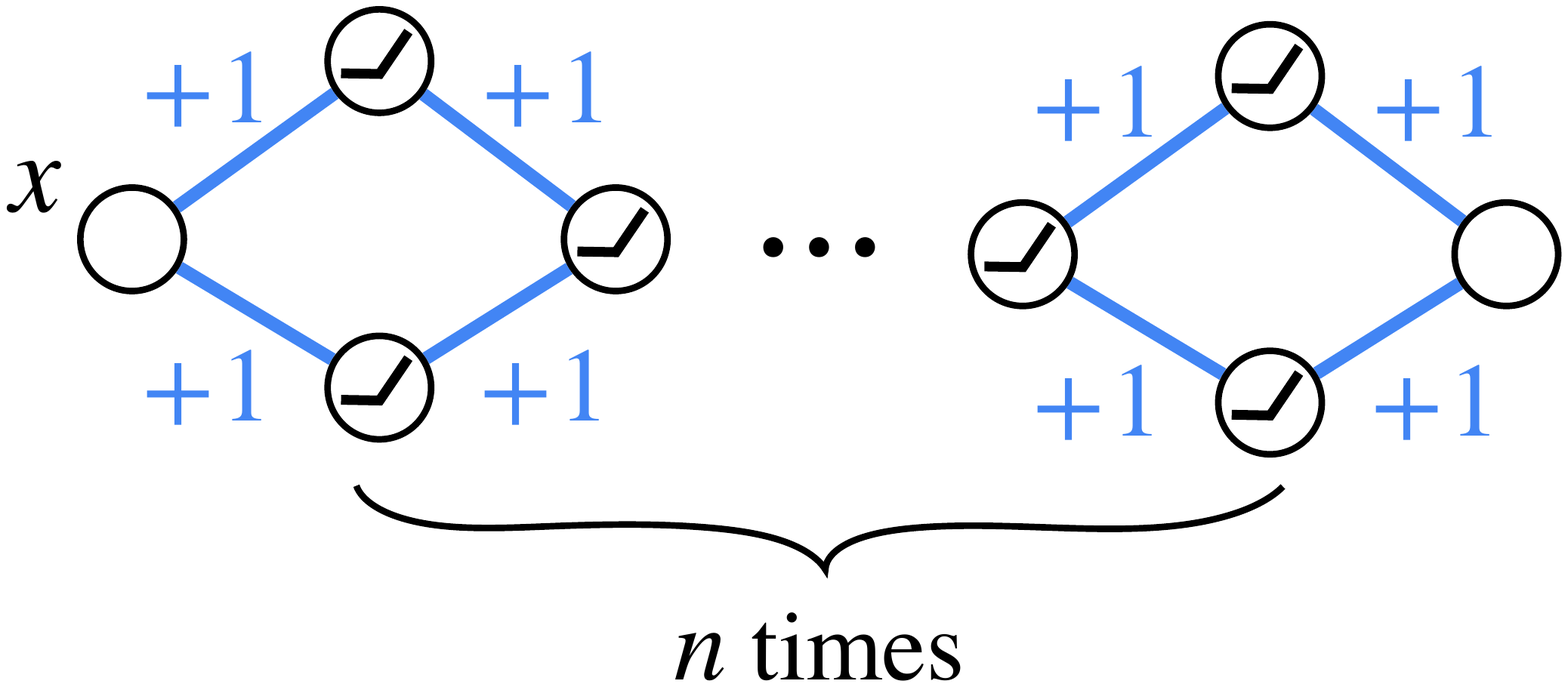}\label{fig:2a}}
	\quad 
 	\subfloat[][\centering{$g_{n-k}(x) = 2^{n-k} \max\{0, x\}$} ]{\includegraphics[height=18mm ]{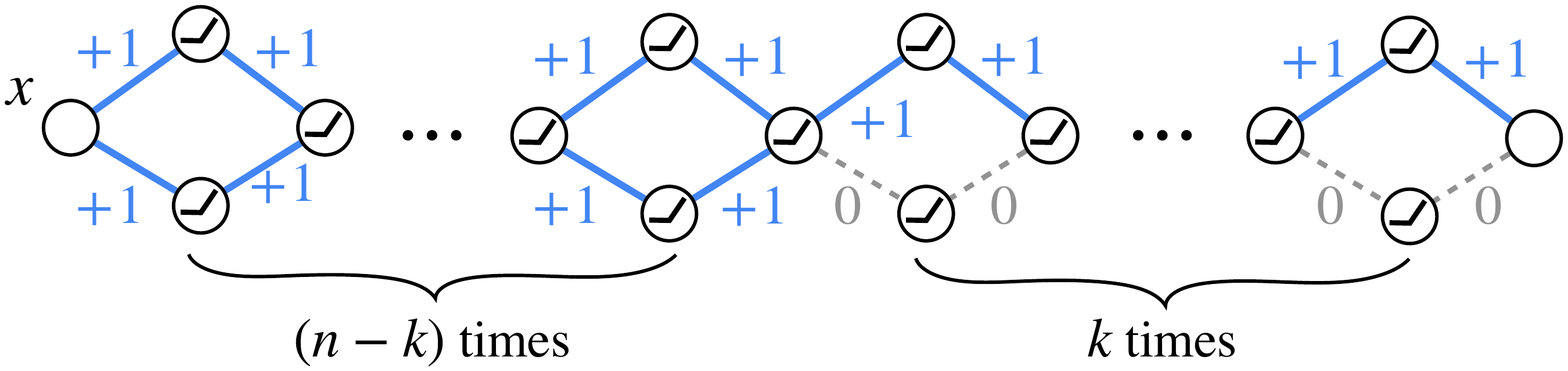}\label{fig:2b}} 
	\vspace{-2mm}
	\caption
	{The diamond-shaped binary ReLU networks that compute $g_n(x)$ and $g_{n-k}(x)$, respectively. The dashed edges are just weights that have been ``pruned'' (set to $0$). The output neuron is a simple linear activation.
    }
	\label{fig:2}
\end{figure}

\begin{figure}[t]
    \vspace{-2mm}
	\centering
	\subfloat[][\centering{$f_{n-k}^{+}(x) = 2^{n-k} x$}]{\includegraphics[width=0.35\textwidth ]{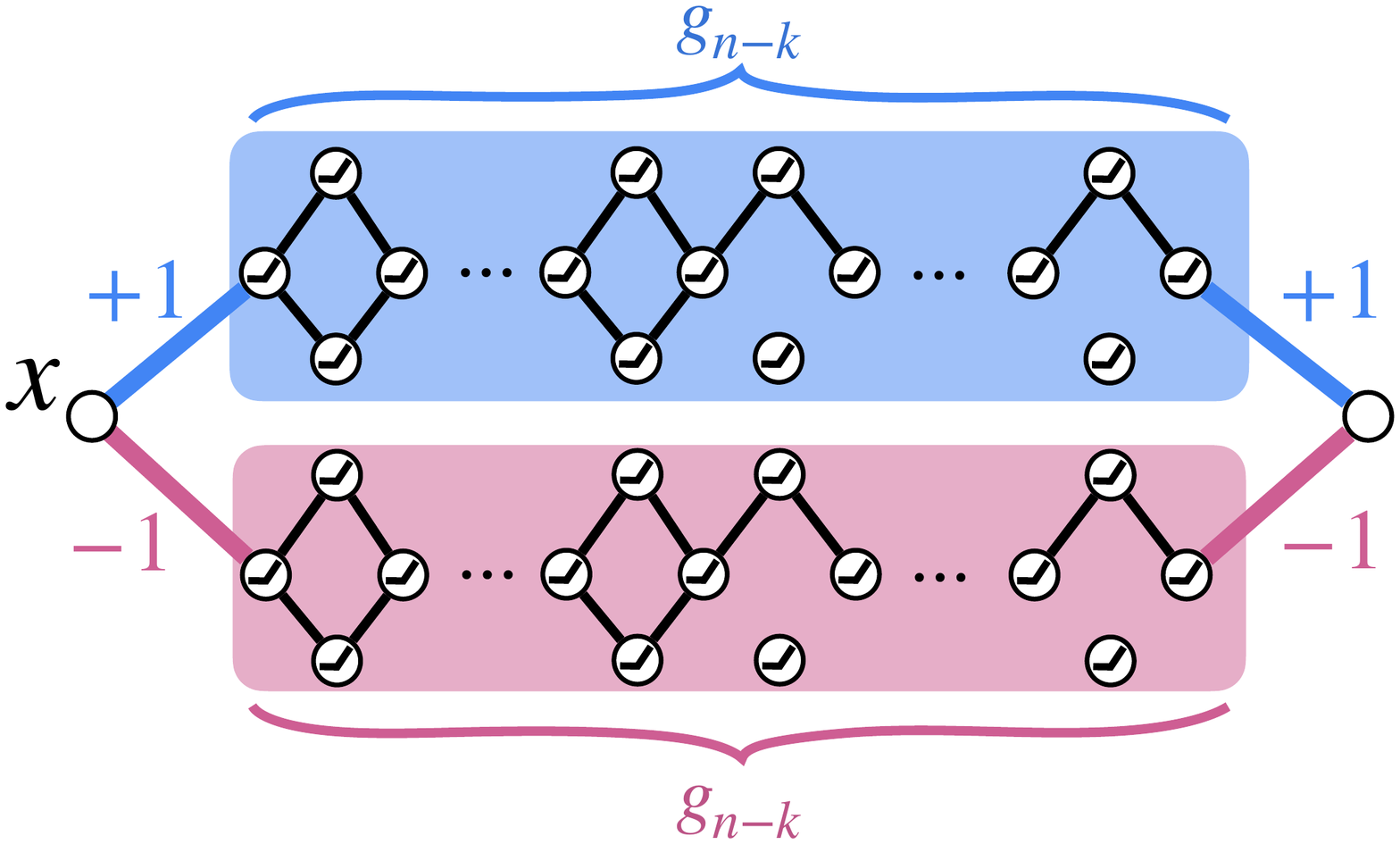}\label{fig:4a}}
	\quad 
 	\subfloat[][\centering{$f_{n-k}^{+}(x) = -2^{n-k} x$} ]{\includegraphics[width=0.35\textwidth]{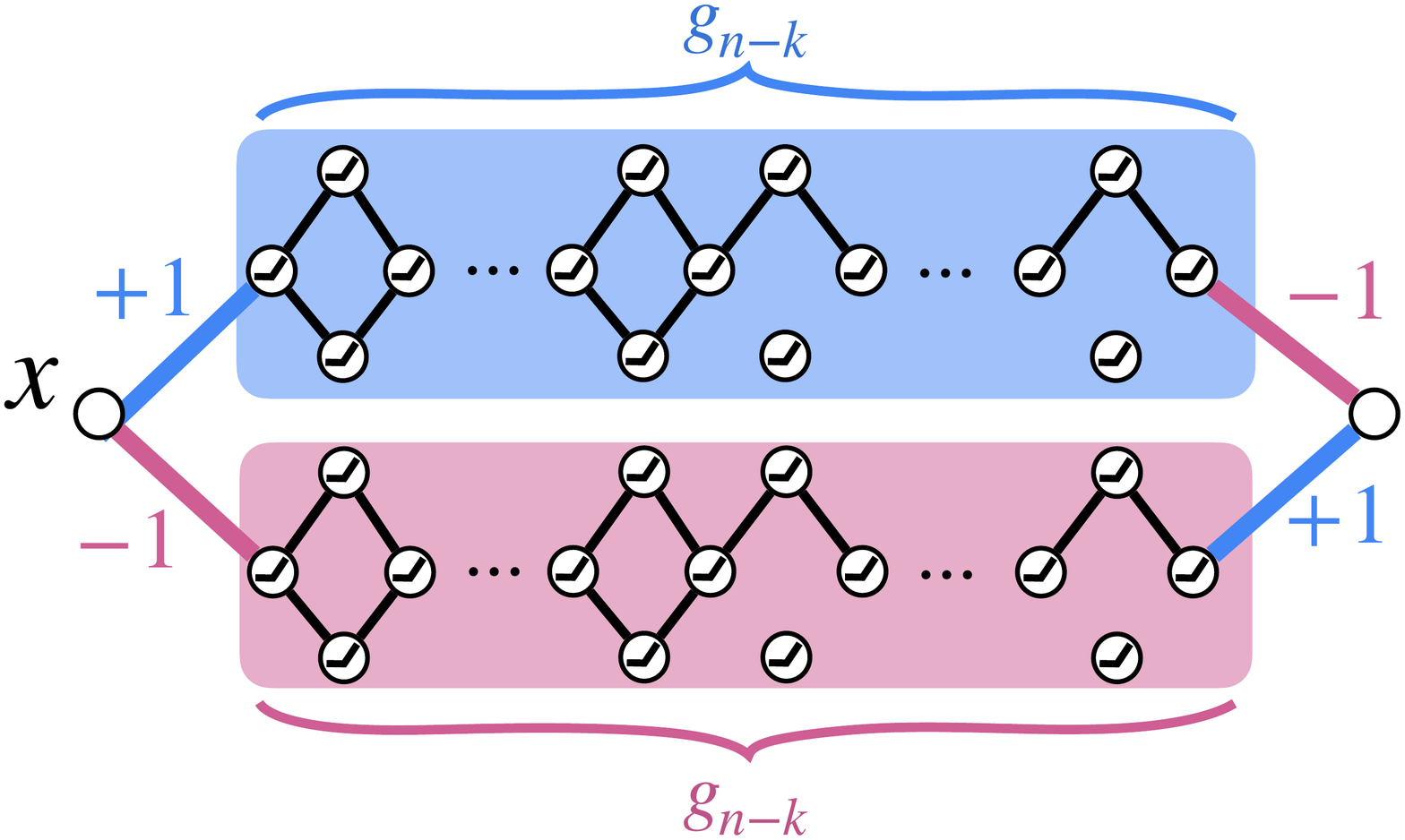}\label{fig:4b}} 
	\vspace{-2mm}
	\caption
	{We can use two instances of $g_{n-k}$ to create (a) $f_{n-k}^+$ and (b) $f_{n-k}^-$ to approximate both positive weights and negative weights. The output neuron here has a linear activation.}
	\label{fig:3}
\end{figure}

\begin{figure}[t]
	\centering
	\includegraphics[width=0.4\textwidth]{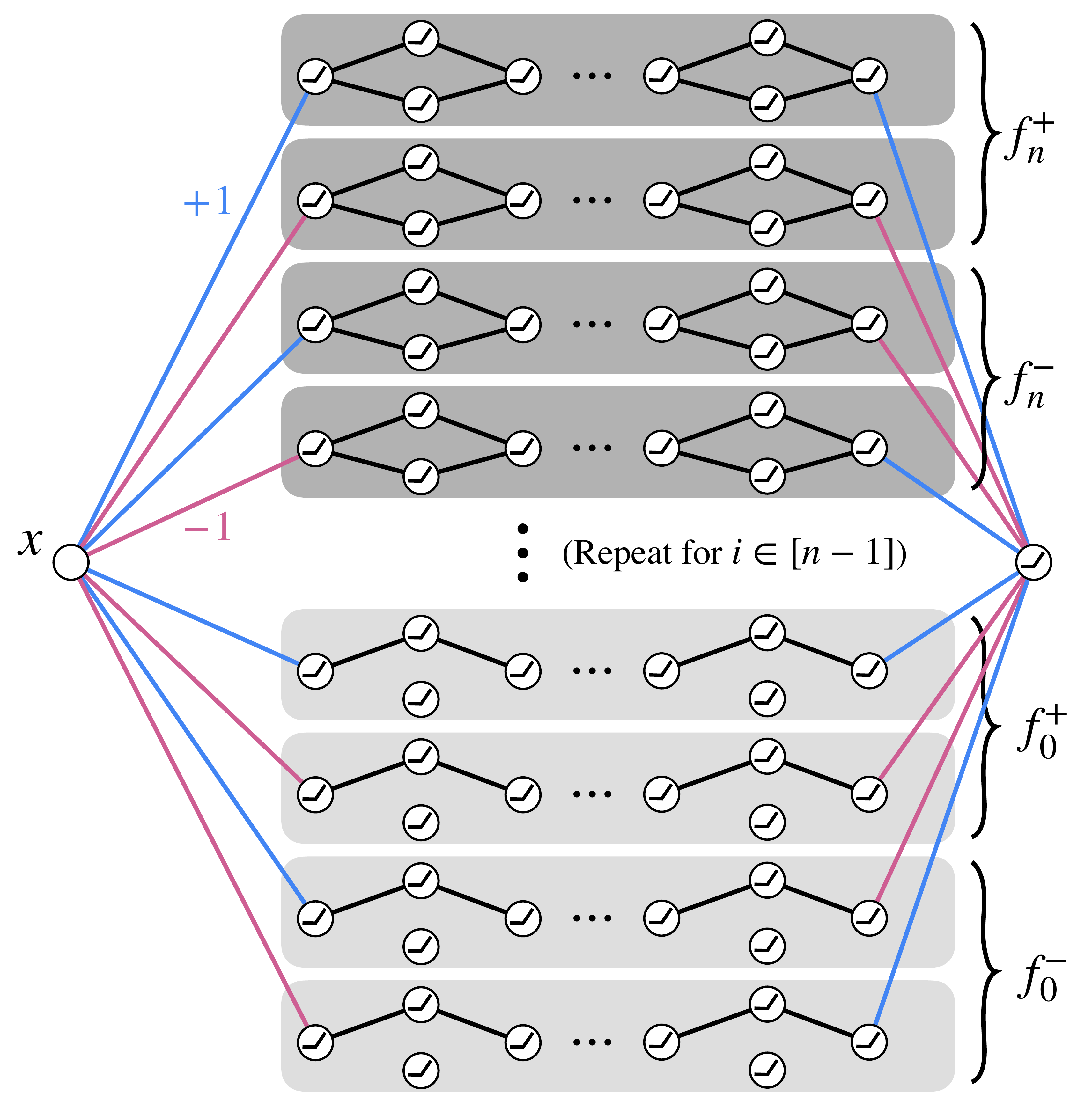}
	\caption{Illustration of 
	$g_k(x) = \sum_{k=0}^n (f_k^+(x) + f_k^-(x)) = \sigma(\pm \sum_{k=0}^n 2^k x)$ which can be further pruned to approximate $f(x) = \sigma(wx)$ for any $w: |w| < 2^{n+1} - 1$}
	\label{fig:binary_weight_approx}
\end{figure}

However, since our constructions use the ReLU activation, this only works when $x \geq 0$. Using the same trick as ~\cite{pensia2020optimal}, we can extend this construction by mirroring it as shown in Figure~\ref{fig:3}. This gives us $f_{n-k}^{+}(x) := 2^{n-k}x$ and $f_{n-k}^{-}(x) := -2^{n-k}x$. The correctness of this mirroring trick relies on the simple observation that $wx = \sigma(wx) - \sigma(-wx)$.

Putting these together, we get $g_n(x) = \sigma(\pm\sum_{k=0}^{n} 2^k x)$ as shown in Figure~\ref{fig:binary_weight_approx}. By pruning just the weights in the last layer, we can choose which terms to include. Setting $n=\lfloor \log|w| \rfloor$ completes the approximation.

To calculate the overparameterization required to approximate $h(x) = \sigma(wx)$, we simply count the parameters in the above construction. Each gadget $g_k$ is a network of width $2$ and depth $(\lfloor\log|w|\rfloor)$. To construct $f_k^{+}$, we need two such gadgets. Therefore to construct $f_k^{+}$ and $f_k^{-}$, we need width $4$ and depth $(\lfloor\log|w|\rfloor)$. Repeating this for each $k \in 1,2,\dots,\lfloor\log|w|\rfloor$ shows that our construction is a network of width and depth $\cO(\log|w|)$ which completes the proof of Lemma~\ref{lem:scalarIntApprox}.\qed

\begin{remark}
The network in Fig.~\ref{fig:binary_weight_approx} used for proving Lemma~\ref{lem:scalarIntApprox} can be written as
\begin{align*}
    g(x) =
    &\sigma(\mM_{\vv} \odot \vv)^T [({\mM}_{n}\odot {\mB}_{n})\sigma(({\mM}_{n-1}\odot{\mB}_{n-1})\ldots\\
    &\ldots \sigma(({\mM}_{1}\odot{\mB}_{1})\sigma(({\mM}_{\vu}\odot{\vu}){x})))]),
\end{align*}
where $\{\mM_i\}_{i \in [n]}, \mM_{\vv}, \mM_{\vu}$ are mask matrices and $\{\mB_i\}_{i \in [n]}, \vv, \vu$ are binary weight matrices. By pruning elements in $\vu$ or $\vv$, one can obtain $h(x) = \sigma(wx)$.
We will always prune the last layer $\vv$ as it makes the construction more efficient when we extend it to approximating a layer.
\end{remark}

Now, we extend the construction to the case where the target function is a neural network with a single neuron \ie $h(x) = \sigma(\vw^T \vx)$.

\begin{lemma}\label{lem:neuronIntApprox}
Consider a network $h(\vx) = \sigma(\vw^T \vx)$ where $\vw \in \mathbb{Z}^{d}$ and $||\vw||_{\infty} \leq w_{max}$. Then there exists a FC ReLU binary network $g(\vx)$ of width $\cO(d\log|w_{max}|)$ and depth $\cO(\log|w_{max}|)$ that can be pruned to $\tilde{g}(\vx)$ so that $\tilde{g}(\vx) = h(\vx)$ for all $\vx \in \mathbb{R}^{d}$.
\end{lemma}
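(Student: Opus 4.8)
The plan is to reduce the single-neuron case to $d$ parallel copies of the single-weight construction from Lemma~\ref{lem:scalarIntApprox}, glued together by one extra neuron that performs the summation. Write $h(\vx) = \sigma(\vw^T\vx) = \sigma\big(\sum_{j=1}^d w_j x_j\big)$ with each $w_j \in \mathbb{Z}$, $|w_j|\le w_{max}$. Each term $w_j x_j$ can be produced by a diamond-gadget subnetwork acting only on the scalar $x_j$, and since a ReLU network can only carry nonnegative values we will expose the \emph{pre-activation} quantity $w_j x_j$ as a signed combination of nonnegative gadget outputs, deferring the single ReLU to the final neuron.

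First I would, for each coordinate $j\in[d]$, instantiate the construction of Lemma~\ref{lem:scalarIntApprox} on input $x_j$, using a common number of levels $n := \lfloor\log_2 w_{max}\rfloor$ for every $j$: the first layer splits $x_j$ into the two nonnegative signals $\sigma(x_j)$ and $\sigma(-x_j)$ via $\pm1$ weights (all cross-coordinate weights pruned, so the $j$-th block reads only $x_j$), and the diamond gadgets then produce, in the penultimate layer, the $2(n+1)$ nonnegative neurons with values $2^k\sigma(x_j)$ and $2^k\sigma(-x_j)$ for $k=0,\dots,n$ — these are exactly the $f_k^{\pm}$ building blocks. Coordinates with $|w_j|<2^{n}$ simply have their unused powers pruned and their values carried forward to depth $n$ by identity ($+1$-weight, ReLU) neurons, which is harmless since everything is nonnegative. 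Stacking these $d$ blocks in parallel yields a binary $\{\pm1\}$ network of width $\cO(d\log|w_{max}|)$ and depth $\cO(\log|w_{max}|)$.

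Second, I would attach a single output neuron. Using the binary expansion $w_j = \sign(w_j)\sum_{k} z_k^{(j)}\,2^k$ and the identity $2^k x_j = 2^k\sigma(x_j) - 2^k\sigma(-x_j)$, I set the weight from the $2^k\sigma(x_j)$ neuron of block $j$ to $\sign(w_j)$ and from the $2^k\sigma(-x_j)$ neuron to $-\sign(w_j)$ whenever $z_k^{(j)}=1$, and prune all remaining incoming weights of this neuron to $0$. All retained weights are $\pm1$, so the object is still a binary network obtained by pruning, and the preactivation of the output neuron equals $\sum_{j=1}^d\sum_{k:\,z_k^{(j)}=1}\sign(w_j)\,2^k x_j = \sum_{j=1}^d w_j x_j$, so after its ReLU it outputs exactly $h(\vx)$. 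The width and depth bounds were already established, completing the proof.

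The main obstacle is bookkeeping rather than a genuine difficulty: one must (i) pad every coordinate block to a common depth so the parallel composition is a well-formed FC network, and (ii) keep the signed gadget outputs $2^k\sigma(\pm x_j)$ available so the final neuron sees $w_j x_j$ linearly rather than $\sigma(w_j x_j)$. Both points are anticipated by the remark following Lemma~\ref{lem:scalarIntApprox} about always pruning the last combining layer $\vv$, so no new ideas beyond parallelization and one summation neuron are needed.
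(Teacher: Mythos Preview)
Your proposal is correct and takes essentially the same approach as the paper: parallelize the single-weight diamond-gadget construction of Lemma~\ref{lem:scalarIntApprox} across the $d$ coordinates and combine the pre-activation outputs with one final ReLU, exactly as enabled by the remark about pruning only the last combining layer $\vv$. The paper's proof is a two-line sketch of this same idea, whereas you have spelled out the bookkeeping (common depth padding, exposing $2^k\sigma(\pm x_j)$ rather than $\sigma(w_j x_j)$) more carefully.
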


\begin{proof}
A neuron can be written as $h(\vx) = \sigma(\vw^T \vx) = \sigma(\sum_{i=1}^d w_i x_i)$. Therefore, we can just repeat our construction from above for each $w_i, i\in [d]$. This results in a network of width $\cO(d\log|w_{max}|)$ while the depth remains unchanged at $\cO(\log|w_{max}|)$.
\end{proof}

\begin{figure}[t]
	\centering
	\includegraphics[width=0.45\textwidth]{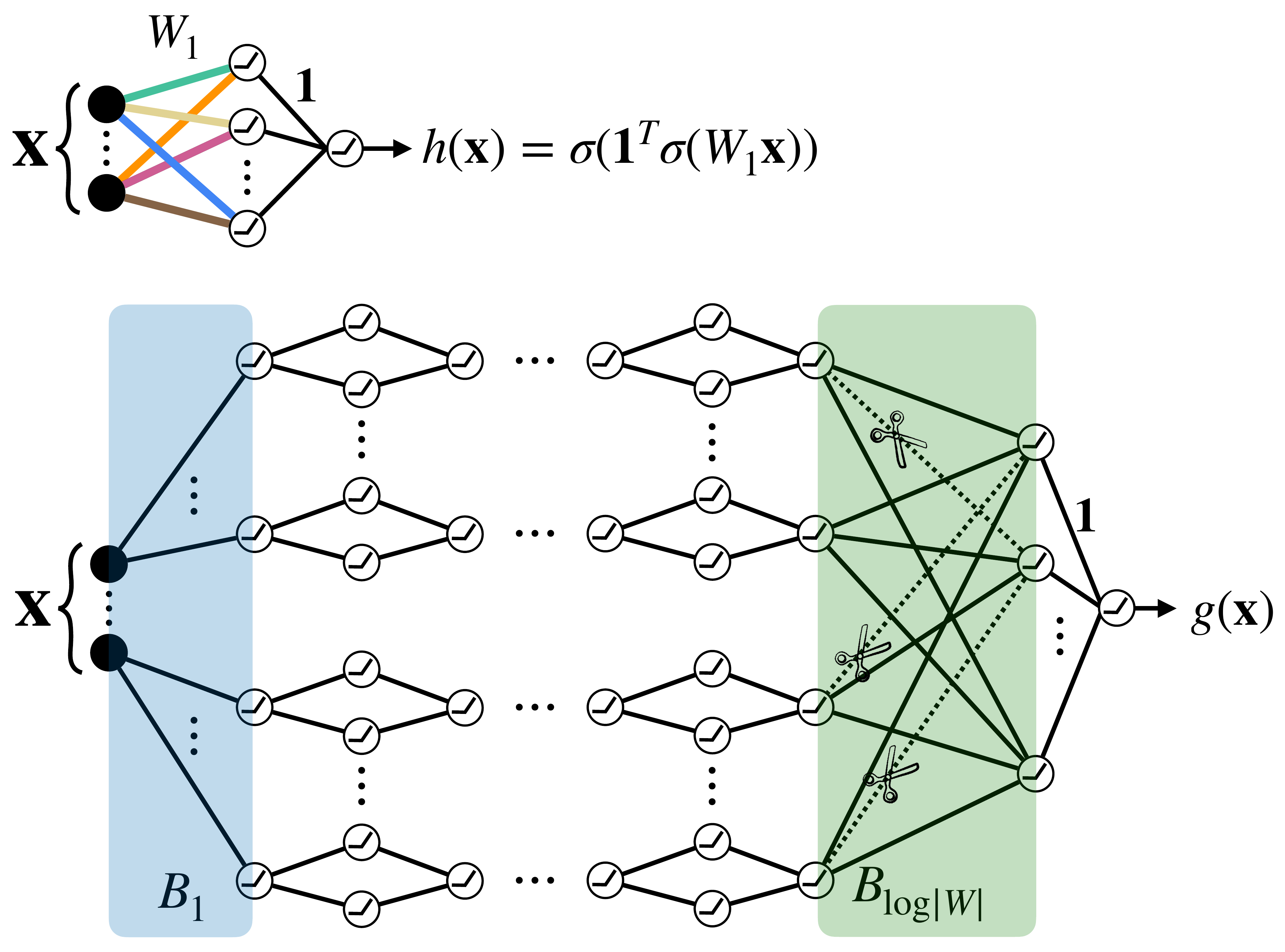}
	\caption{Illustration of the construction in Lemma~\ref{lem:layerIntApprox}: Approximating a 1-hidden layer network $h(\vx) = \sigma(\vOne^T \sigma(\mW_1 \vx))$ by pruning the appropriate binary network $g(\vx)$. Pruning the last layer allows us to ``re-use'' weights.}
	\label{fig:real_to_binary}
\end{figure}

Next, we describe how to approximate a single layer target network and avoid a quadratic overparameterization.

\begin{lemma}\label{lem:layerIntApprox}
Consider a network $h(\vx) = \sigma(\vOne^T \sigma(\mW_1 \vx))$ where $\mW_1 \in \mathbb{Z}^{d_1\times d_0}$ and $||\mW_1||_{max} \leq W$. Then there exists a FC ReLU binary network $g(\vx)$ of width $\cO(\max\{d_0, d_1\}\log|W|)$ and depth $\cO(\log|W|)$ that can be pruned to $\tilde{g}(\vx)$ so that $\tilde{g}(\vx) = h(\vx)$ for all $\vx \in \mathbb{R}^{d_0}$.
\end{lemma}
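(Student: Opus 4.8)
The plan is to build the binary network $g(\vx)$ in two blocks that mirror the structure of $h(\vx) = \sigma(\vOne^T\sigma(\mW_1\vx))$. The naive approach would be to instantiate the single-neuron gadget of Lemma~\ref{lem:neuronIntApprox} for each of the $d_1$ output neurons of the first layer; since each such gadget costs width $\cO(d_0\log|W|)$, this produces a network of width $\cO(d_0 d_1 \log|W|)$ — a quadratic blowup in the target dimensions that we want to avoid. The key observation is that the diamond gadget construction from Lemma~\ref{lem:scalarIntApprox} and Figure~\ref{fig:binary_weight_approx} produces, in its penultimate layer, a collection of $\cO(\log|W|)$ neurons carrying the scaled values $\{\pm 2^k x_j\}_{k=0}^{n}$ for each input coordinate $x_j$; the final ``selection'' layer $\vv$ (which is pruned, not $\vu$) is what picks out the binary digits of a particular weight. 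So for the layer $\mW_1$, I would share the first block of gadgets across all output neurons: build, for each input coordinate $j\in[d_0]$, one copy of the bit-generating gadget producing $\{\pm 2^k x_j\}_{k=0}^{n}$ where $n = \lfloor\log|W|\rfloor$, at total width $\cO(d_0\log|W|)$ and depth $\cO(\log|W|)$.

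Then I would add one more layer with $d_1$ output neurons, where the $i$-th output neuron reads from all the shared bit-neurons and, via the pruning mask, selects exactly the digits $z_k$ of $(\mW_1)_{ij}$ together with the correct sign for each $j$, thereby computing $\sigma(\sum_j (\mW_1)_{ij} x_j) = \sigma((\mW_1)_i^T\vx)$, the $i$-th coordinate of $\sigma(\mW_1\vx)$. This costs width $d_1$. Finally, because the target's outer layer is the fixed readout $\vOne^T$, I can simply sum these $d_1$ neurons with a binary $+1$ mask in a last layer with linear/ReLU activation, recovering $\vOne^T\sigma(\mW_1\vx)$ exactly. Counting: the two gadget blocks have width $\cO(\max\{d_0,d_1\}\log|W|)$ (the $d_0$-side from the shared bit generators, the $d_1$-side from the selection layer, and $\max$ absorbs both), and depth $\cO(\log|W|)$, as claimed. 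Correctness is exact, inheriting the exactness of Lemmas~\ref{lem:scalarIntApprox} and~\ref{lem:neuronIntApprox}, since every step is an identity on ReLU networks (using $wx = \sigma(wx)-\sigma(-wx)$ to handle signs, and positive homogeneity of ReLU to push the selection through).

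The main obstacle — and the reason this lemma is stated separately rather than folded into Lemma~\ref{lem:neuronIntApprox} — is the bookkeeping that makes the weight-sharing actually valid in a \emph{feedforward} binary network: one must check that the shared bit-neurons $\{\pm 2^k x_j\}$ can all live in the same layer (they can, since each is produced by an independent depth-$\cO(\log|W|)$ diamond from $x_j$, and one pads the shallower ones), and that a single subsequent layer with $\{0,1\}$-masked $\{\pm1\}$ weights can realize an arbitrary integer combination $\sum_j (\mW_1)_{ij}x_j$ with $|(\mW_1)_{ij}|\le W$ — which holds precisely because $\sum_{k=0}^{n} z_k 2^k$ ranges over all integers in $[0, 2^{n+1}-1] \supseteq [0,W]$ and the sign is handled by choosing the $f^+$ versus $f^-$ branch. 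A secondary subtlety is ensuring the first hidden layer of $g$ actually has $\vu$ wired to the raw input $\vx$ (width $d_0$) rather than to a widened copy, so that the input dimension is respected; this is immediate from the construction in the remark following Lemma~\ref{lem:scalarIntApprox}. I expect the formal proof to be short, essentially a careful restatement of the above with the dimension counts made explicit, deferring the full diagram to Figure~\ref{fig:real_to_binary}.
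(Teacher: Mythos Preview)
Your proposal is correct and follows essentially the same approach as the paper: build the shared bit-generating gadgets $\{\pm 2^k x_j\}$ once per input coordinate (width $\cO(d_0\log|W|)$, depth $\cO(\log|W|)$), then reuse them across all $d_1$ output neurons by pruning only the final selection layer, exactly as in the remark after Lemma~\ref{lem:scalarIntApprox} and Figure~\ref{fig:real_to_binary}. Your write-up is in fact more detailed than the paper's own proof, which simply states the reuse idea and defers to the figure.
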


\begin{proof}
Note that $\vOne \in \mathbb{R}^{d_1}$ is the vector of $1$'s. Naively extending the construction from Lemma~\ref{lem:neuronIntApprox} would require us to replace each of the $d_0$ neurons in the first layer by a network of width $\cO(d_1\log|W|)$ and depth $\cO(\log|W|)$. This already needs a network of width $\cO(d_0d_1\log|W|)$ which is a \emph{quadratic} overparameterization. Instead, we take advantage of pruning only the last layer in the construction from Lemma~\ref{lem:scalarIntApprox} to re-use a sufficient number of weights and avoid the quadratic overparameterization. An illustration of this idea is shown in Figure~\ref{fig:real_to_binary}. The key observation is that by pruning only the last layer of each $f_n(x)$ gadget, we leave it available to be re-used to approximate the weights of the remaining $(d_{1} - 1)$ neurons. Therefore, the width of the network required to approximate $h(\vx)$ is just $\cO(\max\{d_0, d_1\}\log|W|)$ and the depth remains $\cO(\log|W|)$.
\end{proof}

Now we can tie it all together to show an approximation guarantee for any network in $\mathcal{F}$.
\begin{lemma}\label{lem:networkIntApprox}
Consider a network $f(\vx) = \sigma(\mW_l\sigma(\mW_{l-1} \dots \sigma(\mW_1\vx)))$ where $\mW_i \in \mathbb{Z}^{d_i\times d_{i-1}}$, $||\mW_i||_{max} \leq W$. Let $d =\max_i d_i$. Then, there exists a FC ReLU binary network $g(\vx)$ of width $\cO(d\log|W|)$ and depth $\cO(l\log|W|)$ that can be pruned to $\tilde{g}(\vx)$ so that $\tilde{g}(\vx) = f(\vx)$ for all $\vx \in \mathbb{R}^{d_0}$.
\end{lemma}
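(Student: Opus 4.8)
The plan is to prove Lemma~\ref{lem:networkIntApprox} by \emph{stacking} the single-layer gadget of Lemma~\ref{lem:layerIntApprox} once per layer of $f$, and then checking that the widths stay additive rather than multiplicative across layers. Concretely, decompose $f = L_l \circ \cdots \circ L_1$ where $L_i(\vz) = \sigma(\mW_i \vz)$ with $\mW_i \in \mathbb{Z}^{d_i \times d_{i-1}}$, $||\mW_i||_{\max} \le W$. It suffices to build, for each $i$, a binary ReLU block $G_i$ of depth $\cO(\log|W|)$ and width $\cO(\max\{d_{i-1}, d_i\}\log|W|)$ that, after pruning, computes $L_i$ exactly on its inputs (all of $\mathbb{R}^{d_{i-1}}$ for $i = 1$, and $\mathbb{R}_{\ge 0}^{d_{i-1}}$ for $i \ge 2$, which is all we need since ReLU outputs are nonnegative). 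Concatenating the $G_i$ then realizes $f$.

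For the per-layer block, fix $i$ and abbreviate $m = d_{i-1}$, $n = \lfloor \log|W| \rfloor$. For each input coordinate $z_j$, $j \in [m]$, instantiate the diamond gadget of Lemma~\ref{lem:scalarIntApprox}: with $n$ binary layers this produces the signed dyadic multiples $f_k^{+}(z_j) = 2^k z_j$ and $f_k^{-}(z_j) = -2^k z_j$ for $0 \le k \le n$, valid for all $z_j \in \mathbb{R}$ via the mirroring identity $wz = \sigma(wz) - \sigma(-wz)$. By the remark after Lemma~\ref{lem:scalarIntApprox}, only the \emph{last} layer of each gadget is pruned, so the same dyadic multiples of $z_j$ remain available to feed every output neuron. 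The block's final (pruned) layer then computes, for each output index $t \in [d_i]$, the quantity $\sigma\!\big(\sum_{j=1}^{m} \sum_{k} \pm 2^{k} z_j\big)$, where for each pair $(t,j)$ the set of exponents $k$ and the signs are read off from the signed base-$2$ expansion of the integer $(\mW_i)_{tj}$; this equals $\sigma((\mW_i \vz)_t) = L_i(\vz)_t$. The block thus has $m \cdot \cO(\log|W|)$ internal neurons plus $d_i$ output neurons, i.e.\ width $\cO(\max\{m, d_i\}\log|W|)$ and depth $\cO(\log|W|)$, with all weights in $\{\pm1\}$. This is exactly Lemma~\ref{lem:layerIntApprox} with the trivial $\vOne^{\top}$ readout replaced by the readout $\mW_i$, at no asymptotic cost since it changes only which last-layer edges are kept.

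Now set $G = G_l \circ \cdots \circ G_1$. Since the output of $G_i$ is exactly $\sigma(\mW_i \sigma(\cdots \sigma(\mW_1 \vx))) \in \mathbb{R}_{\ge 0}^{d_i}$, it is a legitimate input to $G_{i+1}$, so $\tilde g := G$ with the indicated prunings satisfies $\tilde g(\vx) = f(\vx)$ for all $\vx \in \mathbb{R}^{d_0}$. The depth is $\sum_{i=1}^{l} \cO(\log|W|) = \cO(l\log|W|)$ and the width is $\max_i \cO(\max\{d_{i-1}, d_i\}\log|W|) = \cO(d\log|W|)$, as claimed. If desired, a scaling neuron can be inserted between consecutive blocks to keep intermediate magnitudes bounded (see the remark following Theorem~\ref{thm:thm2}); this affects neither width nor depth asymptotically, and the input-sign issue at $i=1$ is already subsumed by Lemma~\ref{lem:scalarIntApprox}, which handles all $x \in \mathbb{R}$.

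The one place that genuinely needs care is the width bookkeeping across layers: a naive neuron-by-neuron substitution would cost $d_{i-1} d_i \log|W|$ per layer, a quadratic blow-up. The remedy — already the crux of Lemma~\ref{lem:layerIntApprox} — is that each diamond gadget is pruned only in its last layer, so a single copy of the dyadic expansion of $z_j$ serves all $d_i$ outputs simultaneously, keeping the per-layer width additive at $\cO((d_{i-1}+d_i)\log|W|)$; taking a max over the $l$ layers then gives $\cO(d\log|W|)$. Verifying that this re-use composes cleanly from one block to the next — that the $d_i$ reconstructed, ReLU'd outputs of block $i$ are exactly the inputs block $i+1$ expects — is the main thing to spell out; the rest is the routine base-$2$ expansion and the mirroring trick used above, and the constants hidden in $\cO(\cdot)$ do not accumulate since every block is built identically.
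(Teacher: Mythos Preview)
Your proposal is correct and follows essentially the same approach as the paper: stack the per-layer construction of Lemma~\ref{lem:layerIntApprox} once for each of the $l$ layers of $f$, use exactness (no approximation error) to compose, and read off width $\cO(d\log|W|)$ as a max over layers and depth $\cO(l\log|W|)$ as a sum. The paper's own proof is a two-sentence version of exactly this; your write-up simply spells out the width bookkeeping, the last-layer-only pruning that avoids the quadratic blow-up, and the fact that ReLU outputs are nonnegative so the blocks compose cleanly.
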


\begin{proof}
Note that there is no approximation error in any of the steps above. Therefore, we can just repeat the construction above for each of the $l$ layers in $f(\vx)$. This results in a network $g(\vx)$ of width $\cO(d\log|W|)$ and depth $\cO(l\log|W|)$. 
A more precise description of the dimensions of each layer can be found in the statement of Theorem~\ref{thm:thm1}.
\end{proof}

Finally, below we %
show that our construction can be extended for networks with randomly initialized weights. The proof can be found in \sm.
\begin{lemma}\label{lem:networkIntRandApprox}
Consider a network $f(\vx) = \sigma(\mW_l\sigma(\mW_{l-1} \dots \sigma(\mW_1\vx)))$ where $\mW_i \in \mathbb{Z}^{d_i\times d_{i-1}}$, $||\mW_i||_{max} \leq W$. Define $d = \max_i d_i$ and let $g(\vx)$ be a randomly initialized binary network of width $\Theta\left(d\log\left(\frac{(dl\log^2|W|)}{\delta}\right)\right)$ and depth $\Theta(l\log|W|)$ such that every weight is drawn uniformly from $\{-1, +1\}$. Then $g(\vx)$ can be pruned to $\tilde{g}(\vx)$ so that $\tilde{g}(\vx) = f(\vx)$ for all $\vx \in \mathbb{R}^{d_0}$ with probability at least $1-\delta$.
\end{lemma}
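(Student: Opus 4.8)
The plan is to bootstrap off the deterministic construction of Lemma~\ref{lem:networkIntApprox} and trade its hand-picked $\pm1$ weights for random ones via redundancy-and-pruning, in the same spirit in which \citet{malach2020proving} and \citet{pensia2020optimal} upgrade their deterministic gadgets to statements about random initializations. Fix a target $f\in\mathcal{F}_W$ and let $g_{\mathrm{det}}$ be the deterministic binary network that Lemma~\ref{lem:networkIntApprox} prunes to $f$; write $N=\cO(dl\log^2|W|)$ for an upper bound on its number of neurons. The structural fact that makes everything work is that $g_{\mathrm{det}}$ is assembled entirely from the diamond gadgets of Figures~\ref{fig:2}--\ref{fig:binary_weight_approx}, in which every internal node reads at most two predecessors with prescribed $+1$ weights; the only wide operations are the last (aggregation) layers of each gadget, which one prunes anyway. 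Consequently, the event ``a fresh uniform $\{\pm1\}$ neuron happens to carry exactly the sign pattern needed at a given gadget node'' has probability at least an absolute constant $c>0$, independent of $W$.

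First I would lay down the redundant architecture: keep the same layered backbone and depth $\Theta(l\log|W|)$ as the (width-efficient, constant-fan-in) deterministic scalar gadget, and replace each ``logical'' neuron $v$ of $g_{\mathrm{det}}$ by a \emph{bundle} $B_v$ of $t=\Theta(\log(N/\delta))$ i.i.d.\ uniform $\{\pm1\}$ neurons, which yields the stated width $\Theta\!\left(d\log\!\left(\tfrac{dl\log^2|W|}{\delta}\right)\right)$. I would then run an induction over layers with the hypothesis: ``with the prescribed edges retained, $B_v$ contains at least two neurons functionally equal to $v$.'' The base case is the input layer. For the inductive step, condition on good copies existing in the bundles of the previous layer; a given neuron of $B_v$ is a good copy once its (at most two) random incoming weights to two fixed good predecessors take the desired values, an event of probability $\ge c$, so the probability that fewer than two neurons of $B_v$ are good is at most $t(1-c)^{t-1}\le\delta/N$ for a suitable constant in $t$. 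Demanding \emph{two} good copies per bundle, rather than one, is exactly what lets the next diamond level sum two equal copies to carry out its doubling step, so this is the right strengthening of the hypothesis. A union bound over the $N$ logical neurons gives overall failure probability at most $\delta$; on the complementary event, choosing the masks $\mM_i$ to keep the good copies together with their prescribed incoming edges and to delete everything else leaves a subnetwork isomorphic, as a weighted ReLU circuit, to $g_{\mathrm{det}}$, so $\tilde g(\vx)=g_{\mathrm{det}}(\vx)=f(\vx)$ for all $\vx$ with \emph{no} approximation error, as claimed.

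The step I expect to be the main obstacle is keeping the width blow-up genuinely \emph{logarithmic}: a naive redundancy scheme that fixes an entire incoming weight vector of a neuron pays a factor exponential in the fan-in, so the argument hinges on first realizing the scalar gadget of Lemma~\ref{lem:scalarIntApprox} as a constant-fan-in, constant-width, depth-$\cO(\log|W|)$ tower (accumulating the binary expansion incrementally along the depth rather than in parallel), which turns ``constant success probability per node'' into ``$\Theta(\log(N/\delta))$ copies suffice'' via a Chernoff/union-bound estimate. The second delicate point is meshing the layer-reuse trick of Lemma~\ref{lem:layerIntApprox} with the randomness: there a single gadget feeds many downstream neurons, so one must phrase the events ``$B_v$ is good'' to depend only on the randomness of the edges between two consecutive layers (and on the already-resolved previous layer), so that the inductive step and the final union bound stay clean and the dependence across reused paths does not inflate the failure probability. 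Once these two points are in place, collecting the per-weight dimensions over the $d$ neurons of a layer and the $l$ layers recovers exactly the width and depth in the statement.
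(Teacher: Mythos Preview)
Your plan is sound and matches the spirit of the paper's own argument, which is far terser: the paper simply notes that each diamond of Figure~\ref{fig:2} carries four $\pm1$ weights, so a fresh random diamond matches the all-$+1$ pattern with probability $(1/2)^4$; widening by a factor $k$ makes the probability that none of the $k$ parallel copies is correct equal to $(1-1/16)^k$; and a union bound over the $\Theta(dl\log^2|W|)$ diamond positions in the deterministic construction forces $k=\Theta\!\bigl(\log(dl\log^2|W|/\delta)\bigr)$. Your bundle-and-induct scheme with the invariant ``$B_v$ contains at least two good copies'' is exactly the careful way to make that per-diamond count honest once one tracks consistency along a chain of stacked diamonds, and it lands on the same $k$.

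Two places where your write-up overshoots. First, you do not need to redesign the scalar gadget as a constant-\emph{width} Horner-style tower: every internal ReLU in the paper's diamond chains already has fan-in exactly two, so your ``constant success probability per node'' premise holds for the existing gadget as is, and the only large fan-in appears at the aggregation into the layer-output neurons---precisely the situation your second ``delicate point'' is designed to cover by keeping several good copies at the top of each chain. Second, there is a bookkeeping slip in your width claim: $g_{\mathrm{det}}$ from Lemma~\ref{lem:networkIntApprox} has width $\Theta(d\log|W|)$, so bundling by $t=\Theta(\log(N/\delta))$ yields width $\Theta\bigl(d\log|W|\cdot\log(dl\log^2|W|/\delta)\bigr)$, not $\Theta\bigl(d\log(dl\log^2|W|/\delta)\bigr)$. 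The larger expression is exactly what Theorem~\ref{thm:thm2} states; the bare $\Theta(d\log(\cdot))$ in the lemma statement is simply missing a $\log|W|$ factor, so there is no need to invent a new gadget to meet it.
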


Since every network $f \in \mathcal{F}_W$ satisfies the assumptions of Lemma~\ref{lem:networkIntRandApprox}, we can apply it for the entire $\mathcal{F}_W$. Note that we do not need to apply the union bound over $\mathcal{F}_W$ since the randomness is just needed to ensure the existence of a particular deterministic network - specifically $g(x)$ from Lemma~\ref{lem:networkIntApprox}. Therefore, a single random network is sufficient to ensure the guarantee for \emph{every} $f \in \mathcal{F}_W$. This completes the proof of Theorem~\ref{thm:thm2}.

\subsubsection{Putting everything together}
\label{subsec:putting_things_together}
We now put the above results together to complete the proof of Theorem~\ref{thm:thm1}. First, note that by Lemma~\ref{lem:networkRealApprox}, to approximate any $f \in \mathcal{F}$ within $\epsilon >0$, it suffices to consider $\hat{g}(\vx)$ which is a finite-precision version of $f$ where the precision of each weight is at most $p = \log(d^2l/\epsilon)$. Now, applying the scaling trick in Equation (\ref{eq:scalingTrick})  we can represent $\hat{g}(\vx)$ exactly as a scaled integer network \ie
\begin{equation*}
    \hat{g}(\vx) = c^{-l}\sigma(c\widehat{\mW}_l\sigma(c\widehat{\mW}_{l-1} \dots \sigma(c\widehat{\mW}_1\vx)))
\end{equation*}
where $c = 10^p = (d^2l/\epsilon)$ and all the weight matrices $c\widehat{\mW}_i$ are integer. Since $\lVert \mW_i \rVert_{\op{max}} \leq 1$, it is clear that $\lVert c\widehat{\mW}_i \rVert_{max} \leq c$. Therefore, applying Theorem~\ref{thm:thm2} to the integer network $c^{l}\hat{g}(\vx)$ with $W=(d^2l/\epsilon)$, we have the following. If $h(\vx) = \sigma(\mB_{l'}\sigma(\mB_{l'-1} \dots \sigma(\mB_1\vx)))$ is a randomly initialized binary network of depth $\Theta(l\log(d^2l/\epsilon))$ and width $\Theta\left(\log(d^2l/\epsilon)\log\left(\frac{dl\log^2(d^2l/\epsilon)}{\delta}\right)\right)$, then with probability $1-\delta$, it can be pruned to $\tilde{h}(\vx)$ so that $c^l\hat{g}(\vx) = \tilde{h}(\vx)$ for any $\vx$ in the unit sphere. Therefore, to approximate $\hat{g}(\vx)$ we simply push the scaling factor $c^{-l}$ into the last layer $B_{l'}$ so that its weights are now scaled binary $\{\pm (\epsilon/d^2l)^l\}$. Combining this with the approximation guarantee between $\hat{g}(\vx)$ and $f(\vx)$ completes the proof.

\subsection{Binary weights for classification}\label{sec:classification}
Theorem~\ref{thm:thm2} can easily be extended for classification problems using finite-precision networks. Since $\sign(\cdot)$ is positive scale-invariant, we no longer even require the final layer to be scaled. Applying the same argument as Sec~\ref{subsec:putting_things_together} and then dropping the $c^{-l}$ factor gives us the following corollary.

\begin{corollary}\label{cor:binClassification}
Consider the set of binary classification FC ReLU networks $\mathcal{F}$ of width $d$ and depth $l$, where the weight matrices $\{\mW_i\}_{i=1}^{l}$ are of finite-precision at most $p$ digits. Let $g(\vx) = \sign(\mB_{l'}\sigma(\mB_{l'-1} \dots \sigma(\mB_1\vx)))$ be a randomly initialized binary network with depth $l' = \Theta(lp)$ and width $d'=\Theta(dp\log(dlp/\delta))$ such that every weight is drawn uniformly from $\{-1, +1\}$. Then, with probability at least $1-\delta$, for every $f \in \mathcal{F}$, there exist pruning matrices $\{\mM_i\}_{i=1}^{l'}$ such that $f(\vx) = \tilde{g}(\vx)$ for any $\vx$ where $\tilde{g}(\vx) := \sign(({\mM}_{l'}\odot {\mB}_{l'}) \sigma(({\mM}_{l'-1}\odot{\mB}_{l'-1})\ldots\sigma(({\mM}_{1}\odot{\mB}_{1}){\vx})))$.
\end{corollary}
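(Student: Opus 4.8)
The key point is that $\sign(\cdot)$ is invariant under multiplication by a positive scalar, i.e. $\sign(\alpha z)=\sign(z)$ for every $\alpha>0$. This is precisely the obstruction that forced the scaled final layer $\{\pm(\epsilon/d^2 l)^l\}$ in Theorem~\ref{thm:thm1}: its only role there is to absorb the factor $c^{-l}$ produced by the integer-rescaling identity of Equation~(\ref{eq:scalingTrick}), and for a classification output it simply disappears. So the plan is to reduce $f$ to an integer network via that identity, represent the integer network exactly by pruning a random binary net using Theorem~\ref{thm:thm2}, and compose the result with $\sign$.

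Concretely, first I would take any $f\in\mathcal{F}$, which already has weights of finite precision at most $p$ digits, and write $f(\vx)=\sign\bigl(\mW_l\sigma(\mW_{l-1}\cdots\sigma(\mW_1\vx))\bigr)$. Applying the scaling argument of Equation~(\ref{eq:scalingTrick}) with $c=10^p$ to the ReLU part and collecting the factor $c^{-l}$ at the output gives
\begin{equation*}
    f(\vx)=\sign\!\Bigl(c^{-l}\,\widehat{\mW}_l\,\sigma\bigl(\widehat{\mW}_{l-1}\cdots\sigma(\widehat{\mW}_1\vx)\bigr)\Bigr)=\sign\bigl(\mathcal{N}(\vx)\bigr),
\end{equation*}
where $\widehat{\mW}_i=10^p\mW_i$ is integer, $\mathcal{N}(\vx):=\widehat{\mW}_l\sigma(\widehat{\mW}_{l-1}\cdots\sigma(\widehat{\mW}_1\vx))$, and the last equality uses $\sign(c^{-l}z)=\sign(z)$. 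Writing $W:=10^p\max_i\lVert\mW_i\rVert_{\max}=\Theta(10^p)$, we have $\lVert\widehat{\mW}_i\rVert_{\max}\le W$, so $\mathcal{N}$ is an integer network of the type covered by Theorem~\ref{thm:thm2}, with $\log|W|=\Theta(p)$.

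Then I would invoke Theorem~\ref{thm:thm2} (equivalently, Lemma~\ref{lem:networkIntRandApprox}) on $\mathcal{N}$: it yields a random binary $\{\pm1\}$ network of depth $\Theta(l\log|W|)=\Theta(lp)$ that is $\Theta(\log|W|\cdot\log(dl\log^2|W|/\delta))=\Theta(p\log(dlp/\delta))$ times wider than $f$, i.e. of width $d'=\Theta(dp\log(dlp/\delta))$, which with probability at least $1-\delta$ can be pruned to a subnetwork computing $\mathcal{N}(\vx)$ exactly for all $\vx$. Composing that subnetwork with $\sign$ at the output gives $\tilde g(\vx)=\sign(\mathcal{N}(\vx))=f(\vx)$ for every $\vx$. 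As in Theorem~\ref{thm:thm2}, no union bound over $\mathcal{F}$ is required, since the random draw only certifies the existence of the single fixed deterministic gadget network of Lemma~\ref{lem:networkIntApprox}, which serves every $f\in\mathcal{F}$ at once; hence one random network works uniformly.

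The one point that needs care --- and the only thing separating this from a one-line corollary --- is the output activation: Theorem~\ref{thm:thm2} is phrased with a trailing ReLU, whereas here the last binary layer must pass its (possibly negative) pre-activation straight into $\sign$, and $\sign(\sigma(z))\ne\sign(z)$ in general. This is bookkeeping rather than a genuine obstacle: the output neuron of the diamond gadget already carries a \emph{linear} activation (Figures~\ref{fig:2}--\ref{fig:binary_weight_approx}), so the construction underlying Lemma~\ref{lem:networkIntApprox} / Theorem~\ref{thm:thm2} can be read off with its final ReLU omitted, producing $\mathcal{N}(\vx)$ itself rather than $\sigma(\mathcal{N}(\vx))$; applying $\sign$ then recovers the correct label, and maps the boundary case $\mathcal{N}(\vx)=0$ to $\sign(0)$ consistently on both sides. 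That the $c^{-l}$ factor can simply be discarded is exactly what removes the need for any scaling of the binary weights, which is the improvement over Theorem~\ref{thm:thm1} in the classification setting.
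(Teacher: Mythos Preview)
Your proposal is correct and follows essentially the same route as the paper: the paper's argument for the corollary is precisely to apply the scaling identity of Equation~(\ref{eq:scalingTrick}), invoke Theorem~\ref{thm:thm2} on the resulting integer network, and then drop the $c^{-l}$ factor using the positive scale-invariance of $\sign(\cdot)$. Your additional remark about the trailing ReLU versus linear output of the final gadget layer is a detail the paper glosses over, but you handle it correctly.
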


\section{Conclusion}\label{sec:conc}
In this paper, we prove the Strong LTH for binary networks establishing that logarithmic overparameterization is sufficient for pruning algorithms to discover accurate subnetworks within random binary models. 
By doing this, we provide theory supporting the wide range of experimental work in the field, \eg scaled binary networks can achieve the best SOTA accuracy on benchmark image datasets~\citep{diffenderfer2021multi, RamanujanEtAl20}. Moreover, we show that only the last layer needs to be scaled binary, while  the rest of the network can be purely binary $\{\pm1\}$. It is well known in the binary network literature that a \emph{gain} term (scaling the weights) makes the optimization problem more tractable~\citep{simons2019review}. While this is known empirically, it would be interesting to study this from a theoretical perspective so we can identify better algorithms to find binary networks of high accuracy.

\section*{Acknowledgements}
The authors would like to thank Jonathan Frankle for early discussions on pruning random binary networks. This research was supported by ONR Grant N00014-21-1-2806.

\bibliography{references.bib}

\begin{thebibliography}{}

\bibitem[Barron, 1993]{barron1993universal}
Barron, A.~R. (1993).
\newblock Universal approximation bounds for superpositions of a sigmoidal
  function.
\newblock {\em IEEE Transactions on Information theory}, 39(3):930--945.

\bibitem[Blalock et~al., 2020]{mlsys2020_73}
Blalock, D., Gonzalez~Ortiz, J.~J., Frankle, J., and Guttag, J. (2020).
\newblock What is the state of neural network pruning?
\newblock In {\em Proceedings of Machine Learning and Systems 2020}, pages
  129--146.

\bibitem[Cheng et~al., 2019]{ChengEtAl19}
Cheng, Y., Wang, D., Zhou, P., and Zhang, T. (2019).
\newblock A {{Survey}} of {{Model Compression}} and {{Acceleration}} for {{Deep
  Neural Networks}}.
\newblock {\em arXiv:1710.09282 [cs]}.

\bibitem[Courbariaux et~al., 2015]{courbariaux2015binaryconnect}
Courbariaux, M., Bengio, Y., and David, J.-P. (2015).
\newblock Binaryconnect: Training deep neural networks with binary weights
  during propagations.
\newblock {\em arXiv preprint arXiv:1511.00363}.

\bibitem[Deng et~al., 2020]{deng2020model}
Deng, L., Li, G., Han, S., Shi, L., and Xie, Y. (2020).
\newblock Model compression and hardware acceleration for neural networks: A
  comprehensive survey.
\newblock {\em Proceedings of the IEEE}, 108(4):485--532.

\bibitem[Diffenderfer and Kailkhura, 2021]{diffenderfer2021multi}
Diffenderfer, J. and Kailkhura, B. (2021).
\newblock Multi-prize lottery ticket hypothesis: Finding accurate binary neural
  networks by pruning a randomly weighted network.
\newblock {\em arXiv preprint arXiv:2103.09377}.

\bibitem[Frankle and Carbin, 2018]{frankle2018lottery}
Frankle, J. and Carbin, M. (2018).
\newblock The lottery ticket hypothesis: Finding sparse, trainable neural
  networks.
\newblock In {\em International Conference on Learning Representations}.

\bibitem[Frankle et~al., 2020]{frankle2020linear}
Frankle, J., Dziugaite, G.~K., Roy, D., and Carbin, M. (2020).
\newblock Linear mode connectivity and the lottery ticket hypothesis.
\newblock In {\em International Conference on Machine Learning}, pages
  3259--3269. PMLR.

\bibitem[Han et~al., 2015a]{han2015deep}
Han, S., Mao, H., and Dally, W.~J. (2015a).
\newblock Deep compression: Compressing deep neural networks with pruning,
  trained quantization and huffman coding.
\newblock {\em arXiv preprint arXiv:1510.00149}.

\bibitem[Han et~al., 2015b]{han2015learning}
Han, S., Pool, J., Tran, J., and Dally, W.~J. (2015b).
\newblock Learning both weights and connections for efficient neural networks.
\newblock {\em arXiv preprint arXiv:1506.02626}.

\bibitem[Hanin, 2019]{hanin2019universal}
Hanin, B. (2019).
\newblock Universal function approximation by deep neural nets with bounded
  width and relu activations.
\newblock {\em Mathematics}, 7(10):992.

\bibitem[Hassibi and Stork, 1993]{Stork93}
Hassibi, B. and Stork, D.~G. (1993).
\newblock Second order derivatives for network pruning: {{Optimal Brain
  Surgeon}}.
\newblock In Hanson, S.~J., Cowan, J.~D., and Giles, C.~L., editors, {\em
  Advances in {{Neural Information Processing Systems}} 5}, pages 164--171.
  {Morgan-Kaufmann}.

\bibitem[He et~al., 2018]{he2018amc}
He, Y., Lin, J., Liu, Z., Wang, H., Li, L.-J., and Han, S. (2018).
\newblock Amc: Automl for model compression and acceleration on mobile devices.
\newblock In {\em Proceedings of the European Conference on Computer Vision
  (ECCV)}, pages 784--800.

\bibitem[He et~al., 2017]{he2017channel}
He, Y., Zhang, X., and Sun, J. (2017).
\newblock Channel pruning for accelerating very deep neural networks.
\newblock In {\em Proceedings of the IEEE International Conference on Computer
  Vision}, pages 1389--1397.

\bibitem[Hubara et~al., 2016]{hubara2016binarized}
Hubara, I., Courbariaux, M., Soudry, D., El-Yaniv, R., and Bengio, Y. (2016).
\newblock Binarized neural networks.
\newblock In {\em Proceedings of the 30th International Conference on Neural
  Information Processing Systems}, pages 4114--4122.

\bibitem[Hubara et~al., 2017]{hubara2017quantized}
Hubara, I., Courbariaux, M., Soudry, D., El-Yaniv, R., and Bengio, Y. (2017).
\newblock Quantized neural networks: Training neural networks with low
  precision weights and activations.
\newblock {\em The Journal of Machine Learning Research}, 18(1):6869--6898.

\bibitem[Kidger and Lyons, 2020]{kidger2020universal}
Kidger, P. and Lyons, T. (2020).
\newblock Universal approximation with deep narrow networks.
\newblock In {\em Conference on Learning Theory}, pages 2306--2327. PMLR.

\bibitem[Klusowski and Barron, 2018]{klusowski2018approximation}
Klusowski, J.~M. and Barron, A.~R. (2018).
\newblock Approximation by combinations of relu and squared relu ridge
  functions with $\ell^1$ and $\ell^0$ controls.
\newblock {\em IEEE Transactions on Information Theory}, 64(12):7649--7656.

\bibitem[LeCun et~al., 1990]{LeCunEtAl90}
LeCun, Y., Denker, J.~S., and Solla, S.~A. (1990).
\newblock Optimal {{Brain Damage}}.
\newblock In Touretzky, D.~S., editor, {\em Advances in {{Neural Information
  Processing Systems}} 2}, pages 598--605. {Morgan-Kaufmann}.

\bibitem[Levin et~al., 1994]{LevinEtAl94}
Levin, A.~U., Leen, T.~K., and Moody, J.~E. (1994).
\newblock Fast {{Pruning Using Principal Components}}.
\newblock In Cowan, J.~D., Tesauro, G., and Alspector, J., editors, {\em
  Advances in {{Neural Information Processing Systems}} 6}, pages 35--42.
  {Morgan-Kaufmann}.

\bibitem[Li et~al., 2016]{li2016pruning}
Li, H., Kadav, A., Durdanovic, I., Samet, H., and Graf, H.~P. (2016).
\newblock Pruning filters for efficient convnets.
\newblock {\em arXiv preprint arXiv:1608.08710}.

\bibitem[Malach et~al., 2020]{malach2020proving}
Malach, E., Yehudai, G., Shalev-Schwartz, S., and Shamir, O. (2020).
\newblock Proving the lottery ticket hypothesis: Pruning is all you need.
\newblock In {\em International Conference on Machine Learning}, pages
  6682--6691. PMLR.

\bibitem[Mozer and Smolensky, 1989]{mozer1989skeletonization}
Mozer, M.~C. and Smolensky, P. (1989).
\newblock Skeletonization: A technique for trimming the fat from a network via
  relevance assessment.
\newblock In {\em Advances in neural information processing systems}, pages
  107--115.

\bibitem[Orseau et~al., 2020]{orseau2020logarithmic}
Orseau, L., Hutter, M., and Rivasplata, O. (2020).
\newblock Logarithmic pruning is all you need.
\newblock {\em Advances in Neural Information Processing Systems}, 33.

\bibitem[Pensia et~al., 2020]{pensia2020optimal}
Pensia, A., Rajput, S., Nagle, A., Vishwakarma, H., and Papailiopoulos, D.
  (2020).
\newblock Optimal lottery tickets via subsetsum: Logarithmic
  over-parameterization is sufficient.
\newblock {\em arXiv preprint arXiv:2006.07990}.

\bibitem[Perekrestenko et~al., 2018]{perekrestenko2018universal}
Perekrestenko, D., Grohs, P., Elbr{\"a}chter, D., and B{\"o}lcskei, H. (2018).
\newblock The universal approximation power of finite-width deep relu networks.
\newblock {\em arXiv preprint arXiv:1806.01528}.

\bibitem[Ramanujan et~al., 2020]{RamanujanEtAl20}
Ramanujan, V., Wortsman, M., Kembhavi, A., Farhadi, A., and Rastegari, M.
  (2020).
\newblock What's {{Hidden}} in a {{Randomly Weighted Neural Network}}?
\newblock {\em arXiv:1911.13299 [cs]}.

\bibitem[Rastegari et~al., 2016]{rastegari2016xnor}
Rastegari, M., Ordonez, V., Redmon, J., and Farhadi, A. (2016).
\newblock Xnor-net: Imagenet classification using binary convolutional neural
  networks.
\newblock In {\em European conference on computer vision}, pages 525--542.
  Springer.

\bibitem[Scarselli and Tsoi, 1998]{scarselli1998universal}
Scarselli, F. and Tsoi, A.~C. (1998).
\newblock Universal approximation using feedforward neural networks: A survey
  of some existing methods, and some new results.
\newblock {\em Neural networks}, 11(1):15--37.

\bibitem[Simons and Lee, 2019]{simons2019review}
Simons, T. and Lee, D.-J. (2019).
\newblock A review of binarized neural networks.
\newblock {\em Electronics}, 8(6):661.

\bibitem[Stinchombe, 1989]{stinchombe1989universal}
Stinchombe, M. (1989).
\newblock Universal approximation using feed-forward networks with nonsigmoid
  hidden layer activation functions.
\newblock {\em Proc. IJCNN, Washington, DC, 1989}, pages 161--166.

\bibitem[Wang et~al., 2019]{WangEtAl19}
Wang, Y., Zhang, X., Xie, L., Zhou, J., Su, H., Zhang, B., and Hu, X. (2019).
\newblock Pruning from {{Scratch}}.
\newblock {\em arXiv:1909.12579 [cs]}.

\bibitem[Wen et~al., 2016]{wen2016learning}
Wen, W., Wu, C., Wang, Y., Chen, Y., and Li, H. (2016).
\newblock Learning structured sparsity in deep neural networks.
\newblock {\em arXiv preprint arXiv:1608.03665}.

\bibitem[Wu et~al., 2016]{wu2016quantized}
Wu, J., Leng, C., Wang, Y., Hu, Q., and Cheng, J. (2016).
\newblock Quantized convolutional neural networks for mobile devices.
\newblock In {\em Proceedings of the IEEE Conference on Computer Vision and
  Pattern Recognition}, pages 4820--4828.

\bibitem[Zhou et~al., 2019]{zhou2019deconstructing}
Zhou, H., Lan, J., Liu, R., and Yosinski, J. (2019).
\newblock Deconstructing lottery tickets: Zeros, signs, and the supermask.
\newblock {\em arXiv preprint arXiv:1905.01067}.

\bibitem[Zhu et~al., 2016]{zhu2016trained}
Zhu, C., Han, S., Mao, H., and Dally, W.~J. (2016).
\newblock Trained ternary quantization.
\newblock {\em arXiv preprint arXiv:1612.01064}.

\bibitem[Zhu and Gupta, 2017]{zhu2017prune}
Zhu, M. and Gupta, S. (2017).
\newblock To prune, or not to prune: exploring the efficacy of pruning for
  model compression.
\newblock {\em arXiv preprint arXiv:1710.01878}.

\end{thebibliography}

\appendix
\section{Appendix}
\subsection{Proof of Lemma~\ref{lem:networkRealApprox}}

Before proving Lemma~\ref{lem:networkRealApprox} in its entirety, we first 
extend Lemma~\ref{lem:scalarRealApprox} to approximate a neuron with $\log(d/\epsilon)$-precision.
\begin{lemma}\label{lem:neuronRealApprox}
Consider a network $h(\vx) = \sigma(\vw^Tx)$ where $\vw \in \mathbb{R}^d, ||\vw|| \leq 1$ and $\epsilon > 0$. Let $\hat{\vw}$ be a coordinate-wise finite-precision truncation of $\vw$ up to $\log(d/\epsilon)$ digits and $\hat g(x) = \sigma(\hat{\vw} x)$. Then we have that
\begin{equation*}
    \max_{\vx \in \mathbb{R}^d: ||\vx|| \leq 1} ||h(\vx) - \hat g(\vx)|| \leq \epsilon.
\end{equation*}
\end{lemma}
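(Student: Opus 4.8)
\textbf{Proof proposal for Lemma~\ref{lem:neuronRealApprox}.}
The plan is to reduce the vector-valued statement to the scalar precision bound (Fact~\ref{lem:precision}) coordinate-wise, aggregate the per-coordinate errors into an $\ell_2$ bound on $\vw - \hat\vw$, and then push this through the linear map and the ReLU. First I would observe that $\|\vw\| \leq 1$ implies $|w_i| \leq 1$ for every coordinate $i \in [d]$, so Fact~\ref{lem:precision} applies to each $w_i$ separately: truncating $w_i$ to $\ceil{\Theta(\log(d/\epsilon))}$ digits yields $|w_i - \hat w_i| \leq \epsilon/d$, since $\log(d/\epsilon) = \log\!\big(1/(\epsilon/d)\big)$ up to the base-dependent constant absorbed in $\Theta(\cdot)$.

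Next I would convert the uniform (coordinate-wise) bound into an $\ell_2$ bound: $\|\vw - \hat\vw\|^2 = \sum_{i=1}^d (w_i - \hat w_i)^2 \leq d \cdot (\epsilon/d)^2 = \epsilon^2/d$, so $\|\vw - \hat\vw\| \leq \epsilon/\sqrt{d} \leq \epsilon$. Then, for any $\vx$ with $\|\vx\| \leq 1$, Cauchy--Schwarz gives $|\hat\vw^T\vx - \vw^T\vx| = |(\hat\vw - \vw)^T\vx| \leq \|\hat\vw - \vw\|\,\|\vx\| \leq \epsilon$. Finally, since $\sigma$ is $1$-Lipschitz, $|\sigma(\vw^T\vx) - \sigma(\hat\vw^T\vx)| \leq |\vw^T\vx - \hat\vw^T\vx| \leq \epsilon$, and since this holds for every $\vx$ in the unit ball, taking the supremum over $\vx$ yields the claim. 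This is exactly the one-neuron analogue of Lemma~\ref{lem:scalarRealApprox}, and it is the building block I would then iterate (over the $d_i$ neurons in a layer and over the $l$ layers, using $1$-Lipschitzness of $\sigma$ and $\|\mW_i\| \leq 1$ to control error propagation, with an extra factor of $d$ per layer from summing neuron contributions) to obtain the $\log(d^2 l/\epsilon)$-precision bound of Lemma~\ref{lem:networkRealApprox}.

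I do not expect a genuine obstacle here; the only thing to be careful about is bookkeeping of constants --- specifically, choosing the per-coordinate precision as $\Theta(\log(d/\epsilon))$ so that the aggregated $\ell_2$ error is at most $\epsilon$ rather than $d$ times larger, and making sure the $\ell_\infty \to \ell_2$ conversion ($\|\vu\| \leq \sqrt{d}\,\|\vu\|_{\max}$) is applied in the right direction. When this lemma is later chained to prove Lemma~\ref{lem:networkRealApprox}, the mild subtlety is that each layer contributes both a Lipschitz amplification (bounded by $\|\mW_i\| \leq 1$, hence harmless) and an additive truncation error scaled by the input norm, so one wants the truncation precision to be $\log(d^2 l/\epsilon)$ to absorb the $d$-per-layer and $l$-over-depth losses; I would make that accounting explicit there rather than in this lemma.
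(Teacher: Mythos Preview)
Your proposal is correct and follows essentially the same route as the paper: apply Fact~\ref{lem:precision} coordinate-wise to get $|w_i-\hat w_i|\leq \epsilon/d$, aggregate into $\|\vw-\hat\vw\|\leq \sqrt{d\,(\epsilon/d)^2}=\epsilon/\sqrt d\leq\epsilon$, and finish with Cauchy--Schwarz (the paper omits the explicit ReLU $1$-Lipschitz step but it is implicit). Your added justification that $\|\vw\|\leq 1$ implies $|w_i|\leq 1$, needed to invoke Fact~\ref{lem:precision}, is a detail the paper skips.
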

\begin{proof}
Once again, by Lemma~\ref{lem:precision}, we have that for each coordinate $|\vw_i - \hat{\vw}_i| \leq \epsilon/d$. Therefore, $||\vw - \hat{\vw}|| \leq \sqrt{\sum_{i=1}^d \left(\frac{\epsilon}{d}\right)^2} \leq \epsilon$. Applying Cauchy-Schwarz with $||\vx||\leq 1$ completes the proof.
\end{proof}

We now extend the result to approximating a single layer with finite-precision.
\begin{lemma}\label{lem:layerRealApprox}
Consider a network $h(\vx) = \sigma(\vOne^T \sigma(\mW_1 \vx))$ where $\mW_1 \in \mathbb{R}^{d_1 \times d_0}, ||\mW_1|| \leq 1$ and $\epsilon > 0$. Let $\widehat{\mW_1}$ be a coordinate-wise finite-precision truncation of $\mW_1$ up to $\log(d^2/\epsilon)$ digits and $\hat g(x) = \sigma(\widehat{\mW_1} x)$ where $d = \max\{d_0, d_1\}$. Then we have that
\begin{equation*}
    \max_{\vx \in \mathbb{R}^{d_0}: ||\vx|| \leq 1} ||h(\vx) - \hat g(\vx)|| \leq \epsilon.
\end{equation*}
\end{lemma}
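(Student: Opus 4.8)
The plan is to reuse the single-neuron argument of Lemma~\ref{lem:neuronRealApprox}, now keeping track of the extra dimension factor coming from the outer $\vOne^T\sigma(\cdot)$ readout; I read $\hat g(\vx) = \sigma(\vOne^T\sigma(\widehat{\mW_1}\vx))$, i.e.\ the architecture of $h$ with $\mW_1$ replaced by its coordinate-wise truncation. First I would pass from the entrywise precision guarantee to a norm bound on the matrix perturbation. Since $\|\mW_1\|\le 1$ forces $\|\mW_1\|_{\max}\le 1$, Fact~\ref{lem:precision} applies to each entry and gives $|(\mW_1)_{ij}-(\widehat{\mW_1})_{ij}|\le \epsilon/d^2$ for all $i,j$; summing the $d_0 d_1\le d^2$ squared errors yields
\[
  \|\mW_1-\widehat{\mW_1}\|\;\le\;\|\mW_1-\widehat{\mW_1}\|_F\;\le\;\sqrt{d_0 d_1}\,\frac{\epsilon}{d^2}\;\le\;\frac{\epsilon}{d}.
\]

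Next I would propagate this perturbation through the two layers, using that ReLU is $1$-Lipschitz. For the hidden layer, with $\|\vx\|\le 1$,
\[
  \|\sigma(\mW_1\vx)-\sigma(\widehat{\mW_1}\vx)\|\;\le\;\|(\mW_1-\widehat{\mW_1})\vx\|\;\le\;\frac{\epsilon}{d}.
\]
For the output layer, applying $1$-Lipschitzness of ReLU once more together with Cauchy--Schwarz and $\|\vOne\|=\sqrt{d_1}$,
\[
  |h(\vx)-\hat g(\vx)|\;\le\;\bigl|\vOne^T\bigl(\sigma(\mW_1\vx)-\sigma(\widehat{\mW_1}\vx)\bigr)\bigr|\;\le\;\sqrt{d_1}\,\frac{\epsilon}{d}\;\le\;\epsilon,
\]
where the last inequality uses $\sqrt{d_1}\le\sqrt{d}\le d$. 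Taking the maximum over the unit ball gives the claim.

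There is no genuinely hard step here; the only thing to be careful about is the bookkeeping of dimension factors. The $\sqrt{d_0 d_1}$ lost when passing to the Frobenius norm, multiplied by the $\sqrt{d_1}$ lost in the $\vOne$ readout, is at most $d^2$, which is exactly the extra precision baked into truncating to $\log(d^2/\epsilon)$ digits. This is also why the precision requirement grows from $\log(d/\epsilon)$ in Lemma~\ref{lem:neuronRealApprox} to $\log(d^2/\epsilon)$ here, and it is the same accounting that, iterated across depth, produces the $\log(d^2 l/\epsilon)$ precision used for the full depth-$l$ network in Lemma~\ref{lem:networkRealApprox}.
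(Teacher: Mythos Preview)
Your proof is correct. The paper's own argument differs only in the bookkeeping: instead of bounding the full matrix perturbation in Frobenius/operator norm and then applying Cauchy--Schwarz against $\vOne$, it peels off the outer ReLU, writes $\vOne^T\sigma(\mW_1\vx)=\sum_{i=1}^{d_1}\sigma(W_1^{(i)}\vx)$ row by row, applies the triangle inequality, and invokes the single-neuron Lemma~\ref{lem:neuronRealApprox} on each row to get $\epsilon/d$ per term, summing to $d_1\cdot \epsilon/d\le\epsilon$. Your route is slightly sharper (you lose $\sqrt{d_1}$ rather than $d_1$ at the readout), though this gain is not used anywhere downstream. Conceptually both arguments are the same entrywise-precision $\to$ Lipschitz-propagation scheme; you just aggregate at the matrix level whereas the paper aggregates neuron by neuron.
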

\begin{proof}
Note that for any $\vx: ||\vx|| \leq 1$,
\begin{align*}
    &\lVert h(\vx) - \tilde{g}(\vx)\rVert\\
    &= \lVert\sigma(\sum_{i=1}^{d_1} \sigma({{W_1}^{(i)}}x)) - \sigma(\sum_{i=1}^{d_1} \sigma({{\widehat{W_1}}^{(i)}}x))\rVert\\
    &\leq  \lVert\sum_{i=1}^{d_1} \sigma({{W_1}^{(i)}}x) - \sum_{i=1}^{d_1} \sigma({{\widehat{W_1}}^{(i)}}x)\rVert\tag*{(Since $\sigma$ is 1-Lipschitz)}\\
    &\leq \sum_{i=1}^{d_1} \lVert\sigma({{W_1}^{(i)}}x) - \sigma({{\widehat{W_1}}^{(i)}}x)\rVert\\
    &\leq \sum_{i=1}^{d_1} \frac{\epsilon}{\max\{d_0, d_1\}}\\
    &\leq \epsilon
\end{align*}
Since this holds for any $\vx$, it also holds for the $\vx$ that attains the maximum possible error which completes the proof.
\end{proof}
We are now ready to prove Lemma~\ref{lem:networkRealApprox} which states that to approximate any FC ReLU network within $\epsilon$, it suffices to simply consider weights with logarithmic precision.

\setcounter{lemma}{1}
\begin{lemma}\label{lem:networkRealApproxAppendix}
Consider a network $h(\vx) = \sigma(\mW_l\sigma(\mW_{l-1} \dots \sigma(\mW_1\vx)))$ where $\mW_i \in \mathbb{R}^{d_i\times d_{i-1}}$, $||\mW_i|| \leq 1, d_i \leq d$ and $\epsilon > 0$. Let $\hat{g}(\vx) = \sigma(\widehat{\mW}_l\sigma(\widehat{\mW}_{l-1} \dots \sigma(\widehat{\mW}_1\vx)))$ where $\widehat{\mW_i}$ is a finite precision truncation of $\mW_i$ up to $\log(d^2l/\epsilon)$ digits. Then we have that
\begin{equation*}
    \max_{\vx \in \mathbb{R}^{d_0}: ||\vx|| \leq 1} ||h(\vx) - \hat g(\vx)|| \leq \epsilon.
\end{equation*}
\end{lemma}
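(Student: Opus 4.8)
The plan is to reduce the $l$-layer statement to $l$ applications of the single-layer result (Lemma~\ref{lem:layerRealApprox}) via a telescoping/hybrid argument, combined with a Lipschitz-propagation bound. First I would set up notation for the ``partial'' networks: for each $k \in \{0,1,\dots,l\}$, let $h_k(\vx)$ denote the network whose first $k$ layers use the truncated weights $\widehat{\mW}_1,\dots,\widehat{\mW}_k$ and whose remaining layers use the exact weights $\mW_{k+1},\dots,\mW_l$, so that $h_0 = h$ and $h_l = \hat g$. Then by the triangle inequality,
\begin{equation*}
    \|h(\vx) - \hat g(\vx)\| \;\le\; \sum_{k=1}^{l} \|h_{k-1}(\vx) - h_k(\vx)\|,
\end{equation*}
and it suffices to show each summand is at most $\epsilon/l$.

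The key step is bounding a single term $\|h_{k-1}(\vx) - h_k(\vx)\|$. These two networks agree on layers $1,\dots,k-1$ (producing some common vector $\vz$ feeding into layer $k$) and on the functional form of layers $k+1,\dots,l$; they differ only in whether layer $k$ uses $\mW_k$ or $\widehat{\mW}_k$. Two sub-facts are needed. (i) The input $\vz$ to layer $k$ has norm at most $1$: since $\|\vx\| \le 1$, each ReLU layer with $\|\mW_i\| \le 1$ is a composition of a $1$-Lipschitz map (ReLU, fixing $0$) with a linear map of operator norm $\le 1$, hence non-expansive and fixing the origin, so $\|\vz\| \le \|\vx\| \le 1$. (ii) By Lemma~\ref{lem:layerRealApprox} (or more directly Lemma~\ref{lem:neuronRealApprox} applied coordinatewise, using that $\log(d^2l/\epsilon) \ge \log(d^2/(\epsilon/l))$ so the truncation is at least as fine as that lemma requires), $\|\sigma(\mW_k \vz) - \sigma(\widehat{\mW}_k \vz)\| \le \epsilon/l$ for all $\|\vz\| \le 1$; actually I only need $\|\mW_k \vz - \widehat{\mW}_k \vz\| \le \epsilon/l$, which follows from $\|\mW_k - \widehat{\mW}_k\|$ being small (each entry within $\epsilon/(d^2 l)$ by Fact~\ref{lem:precision}, so the operator norm is at most $\epsilon/l$) and Cauchy--Schwarz. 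Finally, layers $k+1,\dots,l$ are again a composition of non-expansive maps (each $\|\mW_i\|\le 1$ composed with $1$-Lipschitz ReLU), so they do not amplify this discrepancy: $\|h_{k-1}(\vx) - h_k(\vx)\| \le \epsilon/l$.

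Summing over $k = 1,\dots,l$ gives $\|h(\vx) - \hat g(\vx)\| \le \epsilon$ for every $\vx$ with $\|\vx\| \le 1$, and since the bound is uniform in $\vx$ it holds for the maximizer, completing the proof. The main obstacle — really the only point requiring care — is verifying the non-expansiveness of the untruncated tail layers $k+1,\dots,l$ and the truncated head layers $1,\dots,k-1$, i.e.\ making precise that each ReLU layer with $\|\mW_i\| \le 1$ is $1$-Lipschitz and maps the unit ball into the unit ball; once that is in hand the error contributed by perturbing layer $k$ neither shrinks the admissible input domain below the unit ball nor gets amplified downstream, and the telescoping sum closes cleanly. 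One should double-check the precision bookkeeping: the per-layer budget is $\epsilon/l$, and the single-layer lemma at precision $\log(d^2/(\epsilon/l)) = \log(d^2 l/\epsilon)$ delivers exactly that, which matches the truncation depth in the statement.
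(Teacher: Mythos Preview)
Your proposal is correct and takes essentially the same approach as the paper: bound each per-layer perturbation $\|\mW_k - \widehat{\mW}_k\|$ by $\epsilon/l$ via the entrywise truncation estimate (Fact~\ref{lem:precision}) together with the Frobenius-to-operator-norm inequality, and then use the $1$-Lipschitzness of each ReLU layer to accumulate the error linearly over the $l$ layers. The only difference is organizational---you telescope over hybrids $h_k$ that swap one layer at a time from exact to truncated, whereas the paper runs a direct forward induction on $\|\vx_{k} - \widehat{\vx}_{k}\|$ by adding and subtracting $\sigma(\widehat{\mW}_k \vx_k)$ at each step; both versions share the same (harmless) implicit reliance on $\|\widehat{\mW}_i\|\le 1$, which the paper invokes explicitly and you flag as the ``main obstacle.''
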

\begin{proof}
The proof follows inductively. First note that since operator norm is bounded by the frobenius norm, we have for the output of the first layer that,
\begin{align*}
    \lVert \sigma(\mW_1 \vx) - \sigma(\widehat{\mW}_1 \vx)\rVert &\leq \lVert \mW_1 \vx - \widehat{\mW}_1 \vx \rVert\\
    &\leq \lVert \mW_1 - \widehat{\mW}_1\rVert\\
    &\leq \lVert \mW_1 - \widehat{\mW}_1\rVert_F\\
    &\leq \epsilon/l
\end{align*}
Next, denote the output of the first layer by $\vx_2 := \sigma(\mW_1 \vx),\; \widehat{\vx}_2 := \sigma(\widehat{\mW}_1 \vx)$.
Repeating the calculation above for the second layer gives us,
\begin{align*}
    &\lVert \sigma(\mW_2\sigma(\mW_1 \vx)) - \sigma(\widehat{\mW}_2 (\sigma(\widehat{\mW}_1\vx))\rVert\\
    &=\lVert \sigma(\mW_2 \vx_2) - \sigma(\widehat{\mW}_2 \widehat{\vx}_2)\rVert\\
    &= \lVert \sigma(\mW_2 x_2) - \sigma(\widehat{\mW}_2 x_2) + \sigma(\widehat{\mW}_2 x_2) - \sigma(\widehat{\mW}_2 \widehat{\vx}_2)\rVert\\
    &\leq \lVert \sigma(\mW_2 x_2) - \sigma(\widehat{\mW}_2 x_2)\rVert + \lVert\sigma(\widehat{\mW}_2 x_2) - \sigma(\widehat{\mW}_2 \widehat{\vx}_2)\rVert
\end{align*}
Note that the first term is bounded by $\epsilon/l$ since $||\vx_2|| \leq 1$. Further, using the fact that $\lVert\widehat{\mW}_2\rVert \leq 1$ and $\lVert \vx_2 - \widehat{\vx}_2\rVert \leq \epsilon/l$ as proved above,
we get that
\begin{align*}
    \lVert \sigma(\mW_2\sigma(\mW_1 \vx)) - \sigma(\widehat{\mW}_2 (\sigma(\widehat{\mW}_1\vx))\rVert \leq 2\epsilon/l
\end{align*}
By induction, for each layer $1 \leq i \leq l$ we have that for any $\vx: \lVert \vx \rVert \leq 1$,
\begin{align*}
    &\lVert \big(\sigma(\mW_i\sigma(\mW_{i-1} \dots \sigma(\mW_1\vx)))\big)\\
    &- \big(\sigma(\widehat{\mW}_i\sigma(\widehat{\mW}_{i-1} \dots \sigma(\widehat{\mW}_1\vx)))\big)\rVert \leq \epsilon\cdot (i/l)
\end{align*}
Setting $i=l$ completes the proof.
\end{proof}
\setcounter{lemma}{6}

\begin{lemma}\label{lem:networkIntRandApproxAppendix}
Consider a network $f(\vx) = \sigma(\mW_l\sigma(\mW_{l-1} \dots \sigma(\mW_1\vx)))$ where $\mW_i \in \mathbb{Z}^{d_i\times d_{i-1}}$, $||\mW_i||_{max} \leq W$ and $d_i \leq d$. Let $g(\vx)$ be a randomly initialized binary network of width $\Theta\left(d\log\left(\frac{(dl\log^2|W|)}{\delta}\right)\right)$ and depth $\Theta(l\log|W|)$ such that every weight is drawn uniformly from $\{-1, +1\}$. Then $g(\vx)$ can be pruned to $\tilde{g}(\vx)$ so that $\tilde{g}(\vx) = f(\vx)$ for all $\vx \in \mathbb{R}^{d_0}$ with probability at least $1-\delta$.
\end{lemma}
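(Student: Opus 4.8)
The plan is to bootstrap from the deterministic construction of Lemma~\ref{lem:networkIntApprox} to the random-initialization setting by a resampling argument, in the spirit of \citet{malach2020proving,pensia2020optimal}. The first observation I would record is that the network $g^\star$ produced by Lemma~\ref{lem:networkIntApprox} is in fact \emph{target-independent}: its scaffold consists of the diamond gadgets of Figures~\ref{fig:2}--\ref{fig:binary_weight_approx} replicated for each of the $d_0$ inputs and each of the $l$ layers, so that every power $2^0,\dots,2^{\lfloor\log|W|\rfloor}$ of every input coordinate is produced in both flavours, and the recombination weights are all $+1$; consequently, for a given $f\in\mathcal{F}_W$ one only \emph{prunes} the recombination edges, realizing each integer weight through its signed binary expansion (a coefficient in $\{-1,0,1\}$ per bit, obtained as $\alpha^+-\alpha^-$ of the two mask bits on the $\pm$ copies). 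Hence it suffices to exhibit a \emph{single} random binary network $g$ of the stated dimensions which, with probability at least $1-\delta$, admits a set of masks under which it reproduces the full pruning flexibility of $g^\star$; re-masking within that sub-network then yields every $f\in\mathcal{F}_W$, so no union bound over $\mathcal{F}_W$ is needed.

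For the embedding I would partition the neurons of each layer of $g$ into disjoint \textbf{slots} of $r$ neurons, one slot per scaffold neuron, and process the layers in order, conditioning on having already matched the previous layer. The doubling neurons of the diamond gadgets have fan-in two, so a given candidate in a slot carries out the prescribed computation as soon as its two incoming random $\pm1$ signs are the prescribed ones --- probability $1/4$ --- and independence across the $r$ candidates makes a slot fail with probability at most $(3/4)^r$. The recombination neurons are handled by keeping $r$ independent copies of each $\pm$-flavoured bit-gadget feeding them: for a fixed bit position the realizable coefficient is $\sum_{j=1}^r(\alpha^+_j\beta^+_j-\alpha^-_j\beta^-_j)$ over mask bits $\alpha^{\pm}_j\in\{0,1\}$ with random signs $\beta^{\pm}_j$, and this set contains all of $\{-1,0,+1\}$ unless every one of the $2r$ available $\pm1$ summands has the same sign, an event of probability at most $2\cdot 4^{-r}$; a union bound over the $\cO(d\log|W|)$ bit positions of that neuron and then over the $N=\cO(dl\log^2|W|)$ neurons of the scaffold shows that $r=\Theta(\log(N/\delta))$ suffices for overall failure probability below $\delta$. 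The resulting network has width $\cO(d\log|W|)\cdot r=\cO\!\big(d\log|W|\,\log(dl\log^2|W|/\delta)\big)$ and depth $\cO(l\log|W|)$, and the surviving sub-network equals $f$ pointwise.

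The step I expect to be the real obstacle is exactly the treatment of the recombination neurons: I want a single random draw of $g$ to serve \emph{all} targets in $\mathcal{F}_W$ simultaneously, yet forcing a recombination neuron to match the all-$+1$ weight pattern of $g^\star$ verbatim would succeed only with probability exponentially small in its fan-in, which would blow the width up to $\mathrm{poly}(|W|^{d})$. The resolution is the $\pm$-flavour redundancy described above: because both $2^k x_i$ and $-2^k x_i$ are available in $r$ copies each, for every bit the attainable coefficient is the \emph{full} set $\{-1,0,+1\}$ with probability $1-2\cdot 4^{-r}$, so the neuron becomes universal over all target weights after only logarithmically many copies, and this is where the argument genuinely departs from a naive ``find $g^\star$ verbatim'' strategy. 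Everything else --- bounding the number of scaffold neurons, propagating the matching layer by layer, and checking that pruning recovers $f$ exactly --- is routine bookkeeping on top of Lemma~\ref{lem:networkIntApprox}.
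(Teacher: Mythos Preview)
Your approach---widen the deterministic construction of Lemma~\ref{lem:networkIntApprox} by a factor $r$ and argue that the random network then contains a single universally-prunable scaffold with probability $\ge 1-\delta$---is exactly the paper's strategy. The paper's own proof is in fact a much terser sketch: it only tracks the diamond gadgets (four weights each, success probability $(1/2)^4$ per candidate, union bound over the $\Theta(dl\log^2|W|)$ diamonds), and says nothing explicit about the recombination layer. Your observation that a naive ``find $g^\star$ verbatim'' search would fail there because of the $\cO(d\log|W|)$ fan-in, and your subset-sum workaround showing that $\{-1,0,+1\}$ is realizable per bit with failure $2\cdot 4^{-r}$, is a genuine refinement over what the paper writes down.

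There is one small gap you should patch. In the recombination step you assume that $r$ \emph{valid} copies of each $\pm$-flavoured bit-gadget feed the output neuron, and then run the subset-sum argument on the $2r$ random last-layer signs $\beta^{\pm}_j$. But the layer-by-layer diamond matching you describe just before only secures \emph{one} valid candidate per slot (failure $(3/4)^r$); the other $r-1$ candidates in a gadget-output slot need not compute $\pm 2^k x_i$, so setting $\alpha_j=1$ on them does not contribute the term your formula claims. The fix costs nothing in the final bound: once a single valid predecessor pair has been fixed in the penultimate gadget layer, each of the $r$ output-slot candidates is valid independently with probability $1/4$, so by a Chernoff bound $\Omega(r)$ of them are simultaneously valid with probability $1-e^{-\Omega(r)}$. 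Feeding these $\Omega(r)$ valid copies into your recombination argument recovers the claimed $r=\Theta\!\big(\log(dl\log^2|W|/\delta)\big)$.
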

\begin{proof}
Consider any one particular diamond structure in Figure~\ref{fig:2}. If we choose the weights of this network uniformly at random from $\{-1, +1\}$, then the probability that they are all $1$ is at most $(1/2)^4$. If we make the network $k$ times wider, the probability that such a diamond-gadget does not exist in the network is given by $(1-(1/2)^4)^k$. In other words, if we define the event $A$ to be the failure event, then $P(A) = \left(1- (1/2)^4\right)^k$. Note that there are $\Theta(dl\log^2|W|)$ such diamond structures in our network. By symmetry, the failure probability of each of them is identical. To have the overall probability of failure to be at most $\delta$, taking the union bound we have that
\begin{equation*}
    dl \log^2|W| \cdot \left(1 - (1/2)^4\right)^k \leq \delta
\end{equation*}
Hence, it suffices to have $k \geq \Theta\left(\log\left(\frac{(dl\log^2|W|)}{\delta}\right)\right)$. In other words, a randomly initialized binary network of width $\Theta\left(\log\left(\frac{(dl\log^2|W|)}{\delta}\right)\right)$ and depth $\Theta(l\log{|W|})$ contains our deterministic construction with probability at least $1-\delta$.
\end{proof}

\subsection{Observation on the power of zero}
We try to understand here if pruning is essential to the expressive power of binary networks. We note that pruning binary networks is strictly more powerful than merely sign-flipping their weights. To see this, consider the set of all networks that can be derived by pruning 1-hidden layer binary networks of width $m$ and input dimension $d$:
\begin{align*}
    \mathcal{F}_{pruned}^{m} = \bigg\{f : f(\vx) = \sigma\left(\sum_{i=1}^d x_i \sigma\left(\sum_{j=1}^m v_j w_j^{i}\right)\right),\\
    w_j, v_j \in \{+1, -1, 0\}\bigg\}.
\end{align*}
Similarly, the set of all 1-hidden layer binary networks of the same width without pruning is given by
\begin{align}
\label{eq:fBinDef}
    \mathcal{F}_{bin}^{m} = \bigg\{f : f(\vx) = \sigma\left(\sum_{i=1}^d x_i \sigma\left(\sum_{j=1}^m v_j w_j^{i}\right)\right),\nonumber\\
    w_j, v_j \in \{+1, -1\}\bigg\}.
\end{align}
We prove the following proposition that shows that $\mathcal{F}_{pruned}$ is a strictly richer class of functions indicating that pruning is an essential part of approximating classifiers.

\begin{proposition}\label{prop:Prop2}
The function $f(\vx) = \sigma\left(\sum_{i=1}^d i\cdot x_i\right)$ satisfies
$f(\vx) \in \mathcal{F}_{pruned}^{d}$ and $f(\vx) \notin \mathcal{F}_{bin}^{d}$, \ie without pruning, $f(\vx)$ cannot be represented by a single layer binary network of width $d$.
\end{proposition}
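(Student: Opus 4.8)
The plan is to prove the two halves of the claim separately: the membership $f\in\mathcal{F}_{pruned}^{d}$ by exhibiting an explicit ternary weight assignment, and the non-membership $f\notin\mathcal{F}_{bin}^{d}$ by a parity obstruction obtained by probing any candidate network on the standard basis vectors. I will take $d\ge2$ throughout (for $d=1$ one has $f(\vx)=\sigma(x_1)\in\mathcal{F}_{bin}^{1}$, so the intended regime is $d\ge2$).

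For the membership direction, I would choose $v_j=1$ for every $j\in[d]$ and, for each input coordinate $i\in[d]$, set $w_j^{i}=1$ for $j\le i$ and $w_j^{i}=0$ for $j>i$; all of these values lie in $\{+1,-1,0\}$. Then $\sum_{j=1}^{d} v_j w_j^{i}=i>0$, so $\sigma\!\big(\sum_{j=1}^{d} v_j w_j^{i}\big)=i$, and hence the resulting network equals $\sigma\!\big(\sum_{i=1}^{d} i\,x_i\big)=f(\vx)$. This shows $f\in\mathcal{F}_{pruned}^{d}$.

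For the non-membership direction, suppose for contradiction that $f\in\mathcal{F}_{bin}^{d}$, so there exist signs $v_j,w_j^{i}\in\{+1,-1\}$ with $f(\vx)=\sigma\!\big(\sum_{i=1}^{d} x_i\,\sigma(\sum_{j=1}^{d} v_j w_j^{i})\big)$ for all $\vx$. Evaluating both sides at the standard basis vector $\ve_i$ and using $\sigma\circ\sigma=\sigma$ gives $\sigma\!\big(\sum_{j=1}^{d} v_j w_j^{i}\big)=\sigma(i)=i$; since $i\ge1>0$ this forces $\sum_{j=1}^{d} v_j w_j^{i}=i$. But a sum of $d$ terms each equal to $\pm1$ is congruent to $d$ modulo $2$, so we would need $i\equiv d\pmod 2$ simultaneously for $i=1$ and $i=2$, which is impossible. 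Hence no such binary network exists, i.e. $f\notin\mathcal{F}_{bin}^{d}$, and the two parts together prove the proposition.

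I do not anticipate a genuine obstacle; the only points requiring care are the degenerate case $d=1$ and the fact that, for the negative direction, it suffices to match $f$ on the basis vectors $\ve_i$ (a single contradiction is all we need). Should one want the stronger intermediate statement that the effective linear coefficients of any representation are forced to be exactly $(1,2,\dots,d)$, that follows from $\sigma(L_1)\equiv\sigma(L_2)\Rightarrow L_1=L_2$ for linear functions (they agree on the open half-space $\{L_1>0\}$), but this is not needed for the argument above.
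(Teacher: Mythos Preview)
Your proof is correct and uses essentially the same approach as the paper: the identical explicit construction for membership, and the same parity obstruction (a sum of $d$ signs is congruent to $d\bmod 2$) for non-membership. Your version is in fact a bit cleaner---you extract the coefficients by evaluating at the basis vectors $\ve_i$ and observe that $i=1,2$ cannot both share the parity of $d$, and you correctly flag the degenerate case $d=1$---whereas the paper restricts to the nonnegative orthant and contains a minor slip (it refers to the coefficient of $x_d$ as $d-1$), but the underlying argument is the same.
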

\begin{proof}
For simplicity, we consider the case when $x \geq 0$ so that the ReLU is equivalent to a linear activation. If the functions are not equal for the non-negative orthant, then they are surely different. First, note that we recover $f(\vx)$ from $\mathcal{F}_{pruned}^{d}$  if we set $v_j = 1$ and $w_j^{i} = \mathds{1}_{j \leq i}$ for all $i, j$. Therefore, $f(\vx) \in \mathcal{F}_{pruned}^{d}$ holds.
To see that $f \notin \mathcal{F}_{bin}^d$, first note that we can replace $v_jw_j^{i}$ with $z_j^{i} \in \{+1, -1\} \;\forall i, j$ in Equation~(\ref{eq:fBinDef}). Hence, any $g \in \mathcal{F}_{bin}^d$ is of the form $g(\vx) = \sigma\left(\sum_{i=1}^d x_i \sigma\left(\sum_{i=1}^d z_j^i\right)\right)$. Consider the coefficient of $x_d$ in $f(\vx)$ which is $(d-1)$. Since $z_j^{d}$ cannot be set to $0$, the best approximation we can get using $g(\vx)$ is $d$ or $d-2$. In fact, this holds for every odd term in $f(x)$. This completes the proof.
\end{proof}
\begin{remark}
The above proposition suggests that pruning is more powerful than merely flipping signs of a binary network. In fact, the same argument can be extended for binary networks of any fixed width $d$ and depth $l$ to show that pruned networks are more expressive. However, it does not quantify this difference in expressivity.
\end{remark}

\end{document}